\newcolumntype{d}[1]{D{.}{.}{#1}}
\title{Don't Blame the ELBO!\\A Linear VAE Perspective on Posterior Collapse}
\author{%
  James Lucas$^{\ddagger}$\thanks{Intern at Google Brain},~~George Tucker$^{\dagger}$,~~Roger Grosse$^{\ddagger}$,~~Mohammad Norouzi$^{\dagger}$\\ \\
  ~~$\ddagger$University of Toronto\hspace{1.5cm}$\dagger$Google Brain
}
\newcommand{\bz}{\mathbf{z}}
\newcommand{\bx}{\mathbf{x}}
\newcommand{\by}{\mathbf{y}}
\newcommand{\bu}{\mathbf{u}}
\newcommand{\bI}{\mathbf{I}}
\newcommand{\bW}{\mathbf{W}}
\newcommand{\bD}{\mathbf{D}}
\newcommand{\bC}{\mathbf{C}}
\newcommand{\bS}{\mathbf{S}}
\newcommand{\bV}{\mathbf{V}}
\newcommand{\bL}{\mathbf{L}}
\newcommand{\bU}{\mathbf{U}}
\newcommand{\bR}{\mathbf{R}}
\newcommand{\bM}{\mathbf{M}}
\newcommand{\bLambda}{\boldsymbol{\Lambda}}
\newcommand{\bmu}{\boldsymbol{\mu}}
\newcommand{\cN}{\mathcal{N}}
\newcommand{\reals}{\mathbb{R}}
\newcommand{\expect}{\mathbb{E}}
\newcommand{\bbP}{\mathbb{P}}
\def\bWmle{\bW_{\mathrm{\!MLE}}}
\def\sigmamle{\sigma_{\mathrm{MLE}}}
\newif\ifcomments
\definecolor{orange}{rgb}{0.968, 0.545, 0.176}
\newcommand\blfootnote[1]{%
  \begingroup
  \renewcommand\thefootnote{}\footnote{#1}%
  \addtocounter{footnote}{-1}%
  \endgroup
}
\def\ie{{\em i.e.,}}
\LetLtxMacro{\originaleqref}{\eqref}
\renewcommand{\eqref}{Eq.~\originaleqref}
\newcommand{\mc}[1]{\multicolumn{1}{|c|}{#1}}
\begin{document}

\maketitle

\begin{abstract}
\vspace{-.2cm}
    Posterior collapse in Variational Autoencoders (VAEs) arises when the variational posterior distribution closely matches the prior for a subset of latent variables.
    This paper presents a simple and intuitive explanation for posterior collapse through the analysis of linear VAEs and their direct correspondence with Probabilistic PCA (pPCA).
    We explain how posterior collapse may occur in pPCA due to local maxima in the log marginal likelihood. Unexpectedly, we prove that the ELBO objective for the linear VAE does not introduce additional spurious local maxima relative to log marginal likelihood. 
    We show further that training a linear VAE with exact variational inference recovers an identifiable global maximum corresponding to the principal component directions. 
    Empirically, we find that our linear analysis is predictive even for high-capacity, non-linear VAEs and helps
    explain the relationship between the observation noise, local maxima, and posterior collapse in deep Gaussian VAEs.
\end{abstract}

\section{Introduction}
\vspace{-.2cm}

The generative process of a deep latent variable model entails drawing a number of latent factors from the prior and using a neural network to convert such factors to real data points.
Maximum likelihood estimation of the parameters requires marginalizing out the latent factors, which is intractable for deep latent variable models.
The influential work of \citet{kingma2013auto} and \citet{rezende2014stochastic} on Variational Autoencoders (VAEs) enables optimization of a tractable lower bound on the likelihood via
a reparameterization of the Evidence Lower Bound (ELBO)~\citep{jordan1999introduction, blei2017variational}.
This has led to a surge of recent interest in automatic discovery of the latent factors of variation for a data distribution based on VAEs and principled probabilistic modeling~\citep{higgins2016beta, bowman2015generating, chen2018isolating, molauto18}. \blfootnote{Code available at \href{https://sites.google.com/view/dont-blame-the-elbo}{https://sites.google.com/view/dont-blame-the-elbo}}

Unfortunately, the quality and the number of the latent factors learned is influenced by a phenomenon known as {\em posterior collapse},
where the generative model learns to ignore a subset of the latent variables.
Most existing papers suggest that posterior collapse is caused by the KL-divergence term in the ELBO objective, which directly encourages the variational distribution to match the prior~\citep{bowman2015generating, kingma2016improved, sonderby2016ladder}.
Thus, a wide range of heuristic approaches in the literature have attempted to diminish the effect of the KL term in the ELBO to alleviate posterior collapse \citep{bowman2015generating, razavi2018preventing, sonderby2016ladder, huang2018improving}.
While holding the KL term responsible for posterior collapse makes intuitive sense, the mathematical mechanism of this phenomenon is not well understood.
In this paper, we investigate the connection between posterior collapse and spurious local maxima in the ELBO objective
through the analysis of linear VAEs.
Unexpectedly, we show that spurious local maxima may arise even in the optimization of exact
marginal likelihood, and such local maxima are linked with a collapsed posterior.

While linear autoencoders \citep{rumelhart1985learning} have been studied extensively \citep{baldi1989neural, kunin2019loss},
little attention has been given to their variational counterpart from a theoretical standpoint.
A well-known relationship exists between linear autoencoders and PCA --
the optimal solution 
of a linear autoencoder has decoder weight columns that span the same subspace as the one defined by the principal components \citep{baldi1989neural}.
Similarly, the maximum likelihood solution of probabilistic PCA (pPCA)~\citep{tipping1999probabilistic} 
recovers the subspace of principal components.
In this work, we show that a linear variational autoencoder can recover the solution of pPCA.
In particular, by specifying a diagonal covariance structure on the variational distribution,
one can recover an identifiable autoencoder, which at the global maximum of the ELBO recovers the exact principal components
as the columns of the decoder's weights.
Importantly, we show that the ELBO objective for a linear VAE does not introduce any local maxima beyond the log marginal likelihood.

The study of linear VAEs gives us new insights into the cause of posterior collapse and the difficulty of VAE optimization more generally.
Following the analysis of \citet{tipping1999probabilistic}, we characterize the stationary points of pPCA and show that the {\em variance of the observation model} directly influences the stability of local stationary points corresponding to posterior collapse
-- it is only possible to escape these sub-optimal solutions by simultaneously reducing noise and learning better features. Our contributions include:
\begin{itemize}[topsep=0pt, partopsep=0pt, leftmargin=25pt, parsep=0pt, itemsep=2pt]
    \item We verify that linear VAEs can recover the true posterior of pPCA. Further, we prove that the global optimum of the linear VAE recovers the principal components (not just their spanning sub-space). More importantly, we prove that using ELBO to train linear VAEs does not introduce any additional spurious local maxima relative to log marginal likelihood training.
    \item While high-capacity decoders are often blamed for posterior collapse, we show that posterior collapse may occur when optimizing log marginal likelihood even without powerful decoders. Our experiments verify the analysis of the linear setting and show that these insights extend even to high-capacity non-linear VAEs. Specifically, we provide evidence that the observation noise in deep Gaussian VAEs plays a crucial role in overcoming local maxima corresponding to posterior collapse.
\end{itemize}

\section{Preliminaries}\label{sec:prelim}

\paragraph{Probabilistic PCA.}~The probabilitic PCA (pPCA) model is defined as follows. Suppose latent variables $\bz \in \reals^k$ generate data $\bx \in \reals^n$. A standard Gaussian prior is used for $\bz$ and a linear generative model with a spherical Gaussian observation model for $\bx$:%
\begin{align}
    \begin{split}\label{eqn:ppca}
        p(\bz) &~=~ \cN(\mathbf{0},\bI)~, \\
        p(\bx \mid \bz) &~=~ \cN( \bW \bz + \bmu, \sigma^2 \bI)~.
    \end{split}
\end{align}

The pPCA model is a special case of factor analysis \citep{bartholomew1987latent}, which uses a spherical covariance $\sigma^2\bI$ instead of a full covariance matrix.
As pPCA is fully Gaussian, both the marginal distribution for $\bx$ and the posterior $p(\bz\mid\bx)$ are Gaussian, and unlike factor analysis, the maximum likelihood estimates of $\bW$ and $\sigma^2$ are tractable~\citep{tipping1999probabilistic}.

\vspace{-.1cm}
\paragraph{Variational Autoencoders.}~Recently, amortized variational inference has gained popularity as a means to learn complicated latent variable models. In these models, the log marginal likelihood, $\log p(\bx)$, is intractable but a variational distribution, denoted $q(\bz\!\mid\!\bx)$, is used to approximate the posterior $p(\bz\!\mid\!\bx)$, allowing tractable approximate inference using the Evidence Lower Bound (ELBO):
\begin{eqnarray}
\log p(\bx) &=& \expect_{q(\bz \mid \bx)}[\log p(\bx,\bz) -\log q(\bz \mid \bx)] + D_{KL}(q(\bz \mid \bx) || p(\bz \mid \bx)) \\
&\geq& \expect_{q(\bz \mid \bx)}[\log p(\bx,\bz)-\log q(\bz \mid \bx)] \\
&=& \expect_{q(\bz \mid \bx)}[\log p(\bx \mid \bz)] -D_{KL}(q(\bz \mid \bx) || p(\bz))  \qquad (:= ELBO) \label{eqn:elbo}
\end{eqnarray}

The ELBO~\citep{jordan1999introduction, blei2017variational} consists of two terms, the KL divergence between the variational distribution, $q(\bz|\bx)$, and prior, $p(\bz)$, and the expected conditional log-likelihood.
The KL divergence forces the variational distribution towards the prior and so has reasonably been the focus of many attempts to alleviate posterior collapse.
We hypothesize that the log marginal likelihood itself often encourages posterior collapse.

In Variational Autoencoders (VAEs), two neural networks are used to parameterize  $q_\phi(\bz|\bx)$ and $p_\theta(\bx|\bz)$, where $\phi$ and $\theta$ denote two sets of neural network weights.
The encoder maps an input $\bx$ to the parameters of the variational distribution, and then the decoder maps a sample from the variational distribution back to the inputs.

\vspace{-.1cm}
\paragraph{Posterior collapse.} A dominant issue with VAE optimization is posterior collapse, in which the learned variational distribution is close to the prior. This reduces the capacity of the generative model, making it impossible for the decoder network to make use of the information content of all of the latent dimensions. While posterior collapse is widely acknowledged, formally defining it has remained a challenge. We introduce a formal definition in Section~\ref{sec:experiments:posterior_collapse} which we use to measure posterior collapse in trained deep neural networks.

\begin{figure}
    \small
    \centering
    \begin{minipage}{0.31\linewidth}
    \centering
    \includegraphics[width=0.84\linewidth]{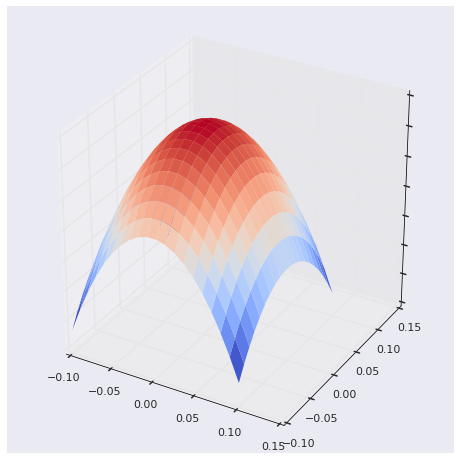}\\
    a) $\sigma^2 = \lambda_4$
    \end{minipage}\hfill
    \begin{minipage}{0.31\linewidth}
    \centering
    \includegraphics[width=.84\linewidth]{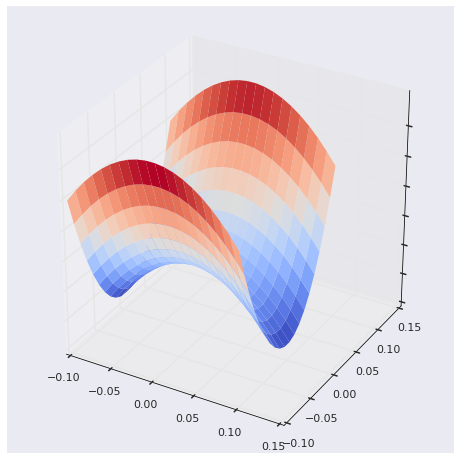}\\
    b) $\sigma^2 = \lambda_6$
    \end{minipage}\hfill
    \begin{minipage}{0.31\linewidth}
    \centering
    \includegraphics[width=.84\linewidth]{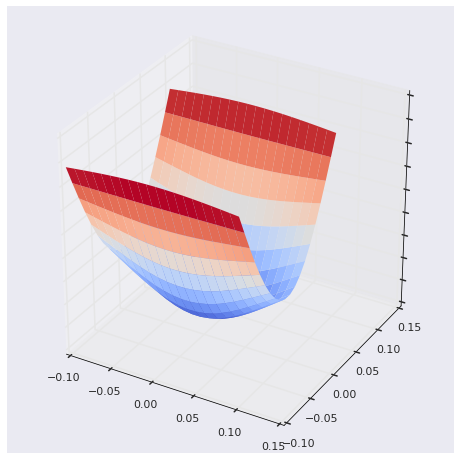}\\
    c) $\sigma^2 = \lambda_8$
    \end{minipage}
    \caption{\small \textbf{Stationary points of pPCA.} Two zero-columns of $\bW$ are perturbed in the directions of two orthogonal principal components ($\mu_5$ and $\mu_7$) and
    the optimization landscape around zero-columns is shown, where the goal is to maximize log marginal likelihood. The stability of the stationary points depends critically on $\sigma^2$ (the observation noise). Left: $\sigma^2$ is too large to capture either principal component. Middle: $\sigma^2$ is too large to capture one of the principal components. Right: $\sigma^2$ is able to capture both principal components.
    \vspace{-0.4cm}
    }
    \label{fig:ppca_stationary}
\end{figure}

\section{Related Work}
\vspace{-0.2cm}

\citet{dai2017hidden} discuss the relationship between robust PCA methods~\citep{candes2011robust} and VAEs.
They show that at stationary points the VAE objective locally aligns with pPCA under certain assumptions. We study the pPCA objective explicitly and show a direct correspondence with linear VAEs.
\citet{dai2017hidden} showed that the covariance structure of the variational distribution may smooth out the loss landscape.
This is an interesting result whose interactions with ours is an exciting direction for future research.

\citet{he2018lagging} motivate posterior collapse through an investigation of the learning dynamics of deep VAEs.
They suggest that posterior collapse is caused by the inference network lagging behind the true posterior during the early stages of training.
A related line of research studies issues arising from approximate inference causing a mismatch between the variational distribution and true posterior \citep{cremer2018inference, kim2018semi, hjelm2016iterative}.
By contrast, we show that posterior collapse may exist even when the variational distribution matches the true posterior exactly. 

\citet{alemi2017fixing} used an information theoretic framework to study the representational properties of VAEs. They show that with infinite model capacity there are solutions with equal ELBO and log marginal likelihood which span a range of representations, including posterior collapse.
We find that even with weak (linear) decoders, posterior collapse may occur. Moreover, we show that in the linear case this posterior collapse is due entirely to the log marginal likelihood.

The most common approach for dealing with posterior collapse is to anneal a weight on the KL term during training from $0$ to $1$ \citep{bowman2015generating, sonderby2016ladder, maaloe2019biva, higgins2016beta, huang2018improving}.
Unfortunately, this means that during the annealing process, one is no longer optimizing a bound on the log-likelihood. Also, it is difficult to design these annealing schedules and we have found that once regular ELBO training resumes the posterior will typically collapse again (Section~\ref{sec:post_collapse_mnist_vae}).

\citet{kingma2016improved} propose a constraint on the KL term, termed "free-bits", where the gradient of the KL term per dimension is ignored if the KL is below a given threshold.
Unfortunately, this method reportedly has some negative effects on training stability \citep{razavi2018preventing, chen2016variational}.
Delta-VAEs \citep{razavi2018preventing} instead choose prior and variational distributions such that the variational distribution can never exactly recover the prior, allocating free-bits implicitly. Several other papers have studied alternative formulations of the VAE objective \citep{rezende2018taming, dai2018diagnosing, alemi2017fixing, ma2018mae, yeung2017tackling}.
\citet{dai2018diagnosing} analyzed the VAE objective to improve image fidelity under Gaussian observation models and also discuss the importance of the observation noise. Other approaches have explored changing the VAE network architecture to help alleviate posterior collapse; for example adding skip connections \citep{maaloe2019biva, dieng2018avoiding}

\citet{rolinek2018variational} observed that the diagonal covariance used in the variational distribution of VAEs encourages orthogonal representations.
They use linearizations of deep networks to prove their results under a modification of the objective function by explicitly ignoring latent dimensions with posterior collapse.
Our formulation is distinct in focusing on linear VAEs without modifying the objective function and proving an exact correspondence between the global solution of linear VAEs and the principal components.

\citet{kunin2019loss} studied the optimization challenges in the linear autoencoder setting. They exposed an equivalence between pPCA and Bayesian autoencoders and point out that when $\sigma^2$ is too large information about the latent code is lost. A similar phenomenon is discussed in the supervised learning setting by \citet{chechik2005information}. \citet{kunin2019loss} also showed that suitable regularization allows the linear autoencoder to recover the principal components up to rotations. We show that linear VAEs with a diagonal covariance structure recover the principal components \emph{exactly}.

\section{Analysis of linear VAE}\label{sec:linear_vae}
\vspace*{-.2cm}

This section compares and analyzes the loss landscapes of both pPCA and linear variational autoencoders. 
We first discuss the stationary points of pPCA and then show that a simple linear VAE can recover the global optimum of pPCA. Moreover, when the data covariance eigenvalues are distinct, the linear VAE identifies the individual principal components, unlike pPCA, which recovers only the PCA subspace. Finally, we prove that ELBO does not introduce any additional spurious maxima to the loss landscape.

\subsection{Probabilistic PCA Revisited}
\vspace*{-.1cm}

The pPCA model (\eqref{eqn:ppca}) is a fully Gaussian linear model, thus we can compute both the marginal distribution for $\bx$ and the posterior $p(\bz \mid \bx)$ in closed form:
\begin{eqnarray}
    p(\bx) &=& \cN(\bmu, \bW\bW^\top + \sigma^2 \bI), \label{eqn:ppca_marginal} \\
    p(\bz \mid \bx) &=& \cN(\bM^{-1}\bW^\top(\bx - \bmu), \sigma^2\bM^{-1}), \label{eqn:ppca_posterior}
\end{eqnarray}

where $\bM = \bW^\top\bW + \sigma^2 \bI$. This model is particularly interesting to analyze in the setting of variational inference, as the ELBO can also be computed in closed form (see Appendix~\ref{app:elbo_stationary}).

\vspace*{-.1cm}
\paragraph{Stationary points of pPCA} We now characterize the stationary points of pPCA, largely repeating the thorough analysis of \citet{tipping1999probabilistic} (see Appendix~A of their paper). The maximum likelihood estimate of $\bmu$ is the mean of the data. We can compute $\bWmle$ and $\sigmamle^2$ as follows:%
\begin{eqnarray}
    \sigmamle^2 &=& \frac{1}{n - k} \sum_{j=k+1}^{n}\lambda_j,\\
    \bWmle &=& \bU_{k}(\bLambda_k - \sigmamle^2 \bI)^{1/2}\bR.\label{eqn:ppca_wmle}
\end{eqnarray}

Here $\bU_k$ corresponds to the first $k$ principal components of the data with the corresponding eigenvalues $\lambda_1, \ldots, \lambda_k$ stored in the $k\times k$ diagonal matrix $\bLambda_k$. The matrix $\bR$ is an arbitrary rotation matrix which accounts for weak identifiability in the model. We can interpret $\sigma^2_{MLE}$ as the average variance lost in the projection. The MLE solution is the global optimum. Other stationary points correspond to zeroing out columns of $\bWmle$ (posterior collapse).

\vspace*{-.1cm}
\paragraph{Stability of $\bWmle$} In this section we consider $\sigma^2$ to be fixed and not necessarily equal to the MLE solution. Equation~\ref{eqn:ppca_wmle} remains a stationary point when the general $\sigma^2$ is swapped in. One surprising observation is that $\sigma^2$ directly controls the stability of the stationary points of the log marginal likelihood (see Appendix~\ref{app:ppca_stationary}). In Figure~\ref{fig:ppca_stationary}, we illustrate one such stationary point of pPCA for different values of $\sigma^2$. We computed this stationary point by taking $\bW$ to have three principal component columns and zeros elsewhere. Each plot shows the same stationary point perturbed by two orthogonal vectors corresponding to other principal components.

The stability of the pPCA stationary points depends on the size of $\sigma^2$ --- as $\sigma^2$ increases the stationary point tends towards a stable local maximum so that we cannot learn the additional components. Intuitively, the model prefers to explain deviations in the data with the larger observation noise. Fortunately, decreasing $\sigma^2$ will increase likelihood at these stationary points so that when learning $\sigma^2$ simultaneously these stationary points are saddle points \citep{tipping1999probabilistic}. Therefore, learning $\sigma^2$ is necessary for gaining a full latent representation.

\subsection{Linear VAEs recover pPCA}
\vspace*{-.1cm}

We now show that linear VAEs can recover the globally optimal solution to Probabilistic PCA. We will consider the following VAE model,
\begin{align}
    \begin{split}\label{eqn:linear_vae}
        p(\bx \mid \bz) ~=~ \cN( \bW \bz + \bmu, \sigma^2 \bI), \\
        q(\bz \mid \bx) ~=~ \cN(\bV (\bx - \bmu), \bD),
    \end{split}
\end{align}
where $\bD$ is a diagonal covariance matrix, used globally for all of the data points. While this is a significant restriction compared to typical VAE architectures, which define
an amortized variance for each input point, this is sufficient to recover the global optimum of the probabilistic model.

\begin{restatable}[]{lemma}{vaeglobalpca}
\label{lemma:vae_global_ppca}
The global maximum of the ELBO objective (\eqref{eqn:elbo}) for the linear VAE (\eqref{eqn:linear_vae}) is identical to the global maximum for the log marginal likelihood of pPCA (\eqref{eqn:ppca_marginal}).
\end{restatable}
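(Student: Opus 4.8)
The plan is to exploit the exact ELBO decomposition together with the fact that the restricted variational family, though generically too small, is just rich enough to realize the true posterior at the likelihood optimum. First I would observe that the decoder of the linear VAE in \eqref{eqn:linear_vae} is \emph{identical} to the pPCA generative model in \eqref{eqn:ppca}: for any fixed decoder parameters $(\bW, \bmu, \sigma^2)$ the induced marginal over $\bx$ is exactly the Gaussian of \eqref{eqn:ppca_marginal}. Hence maximizing $\log p(\bx)$ over the decoder parameters alone is precisely the pPCA maximum-likelihood problem, whose global optimum is the $\bWmle$ of \eqref{eqn:ppca_wmle}.

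Next I would invoke the standard identity $\log p(\bx) = \mathrm{ELBO} + D_{KL}(q(\bz\mid\bx)\,\|\,p(\bz\mid\bx))$ from \eqref{eqn:elbo}. Since the KL term is nonnegative, $\mathrm{ELBO} \le \log p(\bx)$ for every choice of encoder and decoder parameters, with equality exactly when the variational distribution matches the true posterior. Taking the supremum over all parameters on both sides yields the easy direction: the global maximum of the ELBO is at most the global maximum of the pPCA log marginal likelihood.

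The crux --- and the step I expect to be the main obstacle --- is to show this inequality is tight, i.e. that the family $q(\bz\mid\bx)=\cN(\bV(\bx-\bmu),\bD)$ with \emph{diagonal, globally shared} $\bD$ can realize the true posterior at a global optimum of the likelihood. Here I would use the rotational freedom $\bR$ in $\bWmle$: because every choice of $\bR$ gives the same marginal likelihood, I may select $\bR=\bI$, so that $\bW^\top\bW=\bLambda_k-\sigma^2\bI$ and hence $\bM=\bW^\top\bW+\sigma^2\bI=\bLambda_k$ is diagonal. The true posterior \eqref{eqn:ppca_posterior} then has covariance $\sigma^2\bM^{-1}=\sigma^2\bLambda_k^{-1}$, which is diagonal and independent of $\bx$, while its mean $\bM^{-1}\bW^\top(\bx-\bmu)$ is linear in $\bx-\bmu$; both are exactly representable by setting $\bV=\bM^{-1}\bW^\top$ and $\bD=\sigma^2\bM^{-1}$.

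With this choice the KL term vanishes and $\mathrm{ELBO}=\log p(\bx)$ at the pPCA optimum, so the upper bound from the previous step is attained. Combining the two directions shows the global maxima coincide in value and are achieved at the same decoder parameters, namely the pPCA MLE solution. The only delicate point is the one flagged above: the diagonal constraint on $\bD$ would generically prevent $q$ from matching the full posterior, and it is precisely the ability to align the variational axes with the principal directions (via $\bR=\bI$) that restores tightness --- a fact that also foreshadows why the linear VAE pins down the individual principal components rather than merely their span.
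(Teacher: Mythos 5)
Your proposal is correct and follows essentially the same route as the paper's proof: both use the ELBO $\le \log p(\bx)$ bound and then exploit the rotational freedom in $\bWmle$ to pick $\bR=\bI$, making $\bM=\bLambda_k$ diagonal so the true posterior is exactly representable by the diagonal-covariance variational family, which makes the bound tight at the pPCA optimum. Your writeup just makes the two directions (upper bound and attainability) slightly more explicit than the paper does.
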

\vspace{-.4cm}
\begin{proof}
Note that the global optimum of pPCA is defined
up to an orthogonal transformation of the columns of $\bW$, \ie~any rotation $\bR$ in \eqref{eqn:ppca_wmle} results in a matrix $\bWmle$ that given $\sigmamle^2$
attains maximum marginal likelihood.
The linear VAE model defined in \eqref{eqn:linear_vae} is able to recover the global optimum of pPCA 
when $\bR=\bI$.
Recall from \eqref{eqn:ppca_posterior} that $p(\bz \mid \bx)$ is defined in terms of
$\bM = \bW^\top\bW + \sigma^2 \bI$.
When $\bR=\bI$, we obtain $\bM = \bWmle^\top\bWmle + \sigmamle^2 \bI= \bLambda_k$, which is diagonal.
Thus, setting $\bV = \bM^{-1}\bWmle^\top$ and $\bD=\sigmamle^2\bM^{-1} = \sigmamle^2\bLambda_k^{-1}$, recovers the true posterior with diagonal covariance at the global optimum.
In this case, the ELBO equals the log marginal likelihood and is maximized when the decoder has weights $\bW = \bWmle$. Because the ELBO lower bounds log-likelihood, the global maximum of the ELBO for the linear VAE
is the same as the global maximum of the marginal likelihood for pPCA.
\end{proof}

The result of Lemma~\ref{lemma:vae_global_ppca} is somewhat expected because the posterior of pPCA is Gaussian. Further details are given in Appendix~\ref{app:elbo_stationary}.
In addition, we prove a more surprising result that suggests restricting the variational distribution to a Gaussian with a diagonal covariance structure allows one to {\em identify} the principal components at the global optimum of ELBO.

\begin{restatable}[]{corollary}{vaegetspca}
\label{corollary:vae_gets_pca}
The global maximum of the ELBO objective (\eqref{eqn:elbo}) for the linear VAE (\eqref{eqn:linear_vae}) has the scaled principal components as the columns of the decoder network.
\end{restatable}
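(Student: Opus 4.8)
The plan is to combine Lemma~\ref{lemma:vae_global_ppca} with the tightness condition of the ELBO. Recall that for any choice of $\bW$, $\bV$, and $\bD$ the ELBO differs from the log marginal likelihood of pPCA by exactly $D_{KL}(q(\bz\mid\bx)\,\|\,p(\bz\mid\bx))$. Since this KL term is nonnegative and Lemma~\ref{lemma:vae_global_ppca} shows the two global maxima coincide, the ELBO attains its global maximum only when two conditions hold simultaneously: the decoder $\bW$ is a maximum-likelihood solution of pPCA, and the variational posterior matches the true posterior so that the KL gap vanishes. The first condition forces $\bW = \bWmle = \bU_k(\bLambda_k - \sigmamle^2\bI)^{1/2}\bR$ for some orthogonal $\bR$, as in \eqref{eqn:ppca_wmle}. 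The crux of the argument is to show that the second condition, together with the diagonal restriction on $\bD$, pins $\bR$ down to a signed permutation, thereby making the columns of $\bW$ the principal directions scaled by $(\lambda_i - \sigmamle^2)^{1/2}$.

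First I would write out the true posterior covariance $\sigmamle^2\bM^{-1}$ with $\bM = \bW^\top\bW + \sigmamle^2\bI$ from \eqref{eqn:ppca_posterior}. Forcing $q(\bz\mid\bx)$ to equal $p(\bz\mid\bx)$ requires $\bD = \sigmamle^2\bM^{-1}$; because $\bD$ is constrained to be diagonal, $\bM$ and hence $\bW^\top\bW$ must be diagonal as well. Substituting the MLE form of $\bW$ and using $\bU_k^\top\bU_k = \bI$ gives $\bW^\top\bW = \bR^\top(\bLambda_k - \sigmamle^2\bI)\bR$, so the requirement reduces to the statement that $\bR^\top(\bLambda_k - \sigmamle^2\bI)\bR$ is diagonal. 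I would then invoke the distinctness of the data covariance eigenvalues: since $\bLambda_k - \sigmamle^2\bI$ is diagonal with distinct entries, any orthogonal $\bR$ that diagonalizes it by conjugation must have columns that are eigenvectors of this matrix, i.e.\ standard basis vectors up to sign, so $\bR$ is a signed permutation matrix. Reordering and sign-flipping then leaves $\bW = \bU_k(\bLambda_k - \sigmamle^2\bI)^{1/2}$ up to permutation and sign, so the $i$-th decoder column is the principal direction $\bu_i$ scaled by $(\lambda_i - \sigmamle^2)^{1/2}$, with the matching optimal encoder $\bV = \bM^{-1}\bW^\top$ and $\bD = \sigmamle^2\bLambda_k^{-1}$.

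The main obstacle I anticipate is the identifiability step, namely carefully justifying that the diagonal constraint on $\bD$ forces $\bR$ to be a signed permutation rather than an arbitrary rotation. This is precisely where the diagonal variational covariance of the linear VAE departs from unconstrained pPCA: in pPCA every orthogonal $\bR$ is admissible and only the spanning subspace is recovered, so the extra structure of the variational family is exactly what breaks the rotational degeneracy. I would handle the distinct-eigenvalue assumption explicitly and note the degenerate case of repeated eigenvalues, in which only the principal subspace, and not the individual components, is identified.
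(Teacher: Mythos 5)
Your proposal is correct and follows essentially the same route as the paper: the paper's one-line proof defers to Lemma~\ref{lemma:vae_global_ppca} and \eqref{eqn:ppca_wmle}, with the identifiability discussion in Appendix~\ref{app:vae_identifiability}, and your argument---that tightness of the ELBO forces $\bD = \sigmamle^2\bM^{-1}$ to be diagonal, hence $\bR^\top(\bLambda_k - \sigmamle^2\bI)\bR$ diagonal, hence $\bR$ a signed permutation when the eigenvalues are distinct---is precisely the detail the paper leaves implicit. You in fact supply more rigor than the published proof, including the correct caveat about repeated eigenvalues.
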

\vspace{-.4cm}
\begin{proof}
Follows directly from the proof of Lemma~\ref{lemma:vae_global_ppca} and \eqref{eqn:ppca_wmle}.
\end{proof}

We discuss this result in Appendix~\ref{app:vae_identifiability}. This full identifiability is non-trivial and is not achieved even with the regularized linear autoencoder \citep{kunin2019loss}.

So far, we have shown that at its global optimum the linear VAE recovers the pPCA solution, which enforces orthogonality of the decoder weight columns. However, the VAE is trained with the ELBO rather than the log marginal likelihood --- often using SGD. The majority of existing work suggests that the KL term in the ELBO objective is responsible for posterior collapse. So, we should ask whether this term introduces additional spurious local maxima. Surprisingly, for the linear VAE model the ELBO objective \emph{does not} introduce any additional spurious local maxima. We provide a sketch of the proof below with full details in Appendix~\ref{app:elbo_stationary}.

\begin{restatable}[]{theorem}{elbomaxima}
\label{thm:no_elbo_maxima}
The ELBO objective for a linear VAE does not introduce any additional local maxima to the pPCA model.
\end{restatable}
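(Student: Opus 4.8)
The plan is to reduce the four-argument ELBO to a profile objective over the generative parameters alone and then compare that profile, direction by direction, with the log marginal likelihood. First I would write the ELBO of the model in \eqref{eqn:linear_vae} in closed form: since every distribution is Gaussian it is a quadratic-plus-log-det expression in $(\bW,\sigma^2,\bV,\bD)$. I would then set the gradients with respect to the encoder parameters to zero. Assuming the data covariance is full rank, the stationarity condition in $\bV$ forces $\bV = \bM^{-1}\bW^\top$ (the exact posterior-mean map), and the condition in the diagonal $\bD$ forces $\bD = \sigma^2\,\mathrm{diag}(\bM)^{-1}$, where $\bM = \bW^\top\bW + \sigma^2\bI$. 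Because this inner maximization over $(\bV,\bD)$ has a unique, smooth maximizer, every local maximum of the ELBO is a point at which the encoder is optimal and $(\bW,\sigma^2)$ is a local maximum of the profile objective obtained by substituting that optimal encoder back in.

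Next I would identify this profile objective. Using $\mathrm{ELBO} = \log p(\bx) - D_{KL}(q(\bz\mid\bx)\,\|\,p(\bz\mid\bx))$ together with the fact that the optimal $\bV$ matches the posterior mean exactly, the residual KL measures only how far the true posterior covariance $\sigma^2\bM^{-1}$ is from diagonal. A short computation then yields a profile of the form $F(\bW,\sigma^2) = \log p(\bx) - g(\bW,\sigma^2)$, where $g$ is a positive multiple of the Hadamard gap $\log\big(\prod_i M_{ii}\big) - \log\det\bM$. By Hadamard's inequality $g \ge 0$, with equality exactly when $\bM$ — equivalently $\bW^\top\bW$ — is diagonal. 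So the only effect of the ELBO relative to $\log p(\bx)$ is to subtract a nonnegative penalty that vanishes precisely on the set of weight matrices with orthogonal columns.

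I would then argue that any local maximum of $F$ lies on this zero set. The log marginal likelihood is invariant under right-rotations $\bW \mapsto \bW\bR$, whereas $g$ is not: under such a rotation $\bM \mapsto \bR^\top\bM\bR$, so along the rotation orbit $F$ differs from a constant only by $-g$, and $g$ restricted to the orbit equals $\sum_i \log(\bR^\top\bM\bR)_{ii}$ up to a constant. If $\bM$ is not diagonal I would exhibit an infinitesimal rotation in a coordinate plane that strictly decreases this quantity (when the two diagonal entries differ the first-order term is nonzero; when they coincide a direct second-order computation on the $2\times 2$ block shows $\theta=0$ is a local \emph{max} of the penalty), contradicting local maximality of $F$. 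Hence at any ELBO local maximum $\bM$ is diagonal and $g=0$. Since $g\ge 0$ with equality there, the point is a global minimizer of $g$, so $\nabla g = 0$ and therefore $\nabla \log p(\bx) = \nabla F = 0$: it is a stationary point of the log marginal likelihood with orthogonal decoder columns, i.e. exactly a pPCA stationary point of the form in \eqref{eqn:ppca_wmle}.

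The final and most delicate step — which I expect to be the main obstacle — is upgrading ``stationary point of $\log p(\bx)$'' to ``local maximum of $\log p(\bx)$,'' since subtracting the curved penalty $g$ could a priori turn a saddle of $\log p(\bx)$ into a maximum of $F$. Here I would invoke the explicit description of the pPCA stationary points (the characterization of Tipping and Bishop recalled in Section~\ref{sec:linear_vae} and Appendix~\ref{app:ppca_stationary}): a stationary point that fails to be a maximum is destabilized by ``growing'' a zeroed column along an unused principal component with $\lambda > \sigma^2$. The crucial observation is that such a perturbation keeps the columns of $\bW$ orthogonal, so $\bM$ remains diagonal and $g \equiv 0$ along it; consequently $F = \log p(\bx)$ along exactly these ascent directions. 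Thus $\log p(\bx)$ admits a strict ascent direction at the point if and only if $F$ does, so $F$ can attain a local maximum only where $\log p(\bx)$ does. Combined with the reduction above, this shows the ELBO introduces no local maxima beyond those of the log marginal likelihood; I anticipate the bookkeeping of the second-order conditions in this last step to require the most care.
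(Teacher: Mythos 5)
Your proposal is correct, and its engine is the same as the paper's: both arguments hinge on the fact that, once the encoder is profiled out, the gap between the ELBO and $\log p(\bx)$ is $\tfrac{N}{2}\bigl(\log\det\mathrm{diag}(\bM)-\log\det\bM\bigr)$, which leaves $\log p(\bx)$ invariant but is strictly decreased by an infinitesimal right-rotation of $\bW$ whenever $\bM$ is not diagonal (the paper builds this rotation via the exponential map on the special orthogonal group; your coordinate-plane Givens rotation, with the separate first- and second-order cases, is an equally valid and more concrete construction, and the Hadamard-inequality framing of the gap is a clean way to package the paper's $\log\det\widetilde{\bM}-\log\det\bM$ computation). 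Where you go beyond the paper is the last step. The paper's proof establishes that non-orthogonal stationary points are saddles and that at orthogonal-column stationary points the ELBO equals $\log p(\bx)$, but it does not explicitly rule out that a \emph{saddle} of $\log p(\bx)$ with orthogonal columns becomes a local maximum of the ELBO --- a priori possible, since the bound is tight only at the point itself and could fall away faster than $\log p(\bx)$ rises. Your observation that the destabilizing directions of the non-maximal pPCA stationary points (growing a zero column along an unused eigenvector, perturbing a retained column toward a larger unused eigendirection, or moving $\sigma^2$) all keep $\bW^\top\bW$ diagonal, hence keep the bound tight along the entire perturbation path so that $F=\log p(\bx)$ there, is exactly what closes this and is the most valuable addition in your write-up; the remaining bookkeeping is only to confirm that Tipping and Bishop's ascent direction is in every case an eigenvector orthogonal to all retained columns, which it is.
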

\vspace{-.4cm}
\begin{proof}(Sketch)
If the decoder has orthogonal columns, then the variational distribution recovers the true posterior at stationary points. Thus, the variational objective will exactly recover the log marginal likelihood.
If the decoder does not have orthogonal columns then the variational distribution is no longer tight. However, the ELBO can always be increased by applying an infinitesimal rotation to the right-singular vectors of the decoder towards identity: $\bW' \leftarrow \bW \bR_{\epsilon}$ (so that the decoder columns are closer to orthogonal).
This works because the variational distribution can fit the posterior more closely while the log marginal likelihood is invariant to rotations of the weight columns.
Thus, any additional stationary points in the ELBO objective must necessarily be saddle points.
\end{proof}
The theoretical results presented in this section provide new intuition for posterior collapse in VAEs. In particular, the KL between the variational distribution and the prior is not entirely responsible for posterior collapse --- log marginal likelihood has a role. The evidence for this is two-fold. We have shown that log marginal likelihood may have spurious local maxima but also that in the linear case the ELBO objective does not add any additional spurious local maxima. Rephrased, in the linear setting the problem lies entirely with the probabilistic model. We should then ask, to what extent do these results hold in the non-linear setting?

\section{Deep Gaussian VAEs}

The deep Gaussian VAE consists of a decoder $D_\theta$ and an encoder $E_\phi$. The ELBO objective can be expressed as,
\begin{equation}\label{eqn:gaussian_elbo}
    \mathcal{L}(\bx; \theta, \phi) ~=~ -\mathrm{KL}(q_\phi(\bz\mid\bx)\:\Vert\: p(\bz)) - \frac{1}{2\sigma^2}\expect_{q_\phi(\bz|\bx)}\left[\lVert D_\theta(\bz) - \bx \rVert^2 \right] - \frac{1}{2}\log(2\pi\sigma^2)
\end{equation}
The role of $\sigma^2$ in this objective invites a natural comparison to the $\beta$-VAE objective \citep{higgins2016beta},
where the KL term is weighted by $\beta \in \mathbb{R}^+$. 
\citet{alemi2017fixing} propose using small $\beta$ values to force powerful decoders to utilize the latent variables, but this comes at the cost of poor ELBO. Practitioners must then use downstream task performance for model selection, thus sacrificing one of the primary benefits of likelihood-based models.
However, for a given $\beta$, one can find a corresponding $\sigma^2$ (and a learning rate) such that the gradient updates to the network parameters are identical. Importantly, the Gaussian partition function for a Gaussian observation model (the last term on the RHS of \eqref{eqn:gaussian_elbo}) prevents ELBO from 
deviating from the $\beta$-VAE's objective with a $\beta$-weighted KL term while maintaining the benefits to representation learning when $\sigma^2$ is small. For the Gaussian VAE, this helps connect the dots between the role of local maxima and observation noise in posterior collapse {\em vs.}
heuristic approaches that attempted to alleviate posterior collapse by diminishing the effect of the KL term~\citep{bowman2015generating, razavi2018preventing, sonderby2016ladder, huang2018improving}. In the following section, we will study the nonlinear VAE empirically and explore connections to the linear theory.

\section{Experiments}\label{sec:experiments}
\vspace{-.2cm}

In this section, we present empirical evidence found from studying two distinct claims. First, we verify our theoretical analysis of the linear VAE model. Second, we explore to what extent these insights apply to deep nonlinear VAEs.

\subsection{Linear VAEs}
\vspace{-.1cm}
We ran two sets of experiments on 1000 randomly chosen MNIST images. First, we trained linear VAEs with learnable $\sigma^2$ for a range of hidden dimensions\footnote{The VAEs were trained using the analytic ELBO (Appendix~\ref{app:elbo_stationary:analytic_elbo}) and without mini-batching gradients.}. For each model, we compared the final ELBO to the maximum-likelihood of pPCA finding them to be essentially indistinguishable (as predicted by Lemma~\ref{lemma:vae_global_ppca} and Theorem~\ref{thm:no_elbo_maxima}). For the second set of experiments, we took the pPCA MLE solution for $\bW$ for each number of hidden dimensions and computed the likelihood under the observation noise which maximizes likelihood for 50 hidden dimensions. We observed that adding additional principal components (after 50) will initially improve likelihood but eventually adding more components (after 200) actually decreases the likelihood. In other words, the collapsed solution is actually preferred if the observation noise is not set correctly --- we observe this theoretically through the stability of the stationary points (e.g. Figure~\ref{fig:ppca_stationary}).

\begin{figure}[H]
    \centering
    \includegraphics[width=.8\linewidth]{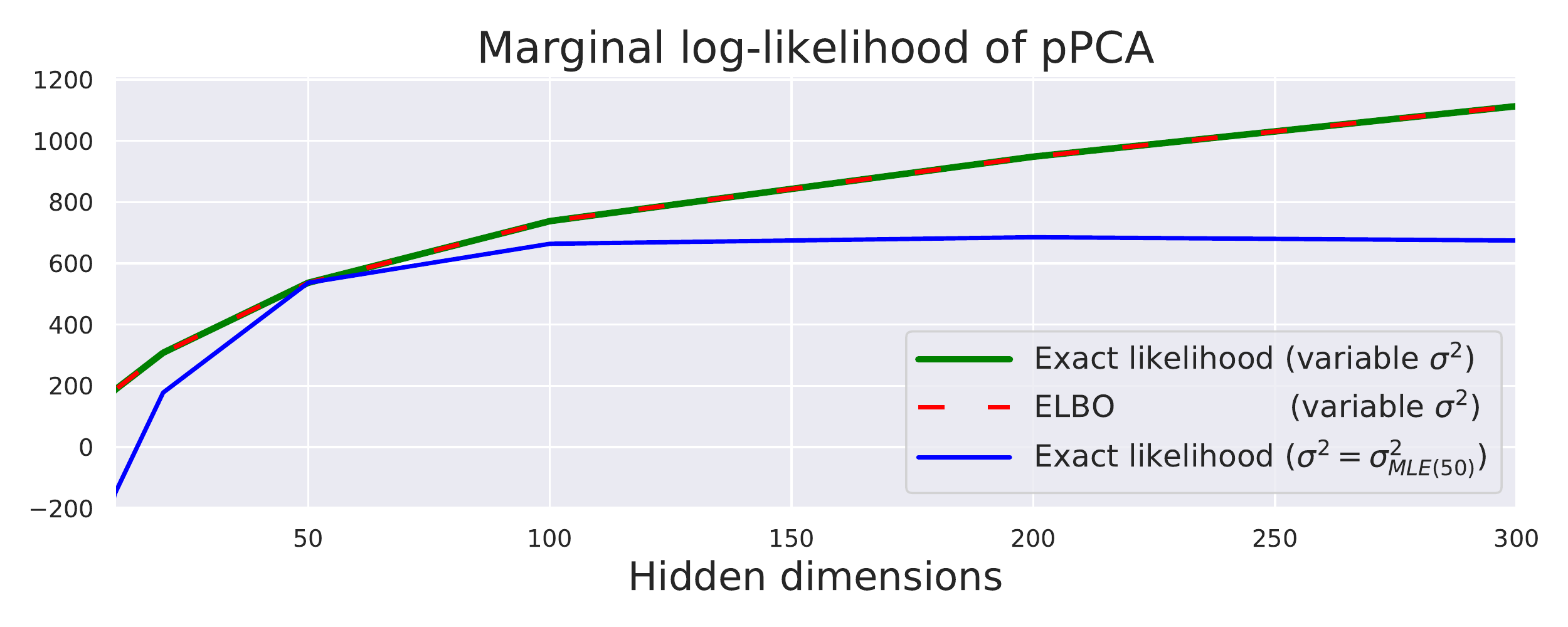}
    \caption{\small The log marginal likelihood and optimal ELBO of MNIST pPCA solutions over increasing hidden dimension. Green represents the MLE solution (global maximum), the red dashed line is the optimal ELBO solution which matches the global optimum. The blue line shows the log marginal likelihood of the solutions using the full decoder weights when $\sigma^2$ is fixed to its MLE solution for 50 hidden dimensions.}
    \label{fig:linear_vae_optimal}
\end{figure}

\begin{minipage}{0.49\linewidth}
\begin{figure}[H]
    \centering
    \includegraphics[width=\linewidth]{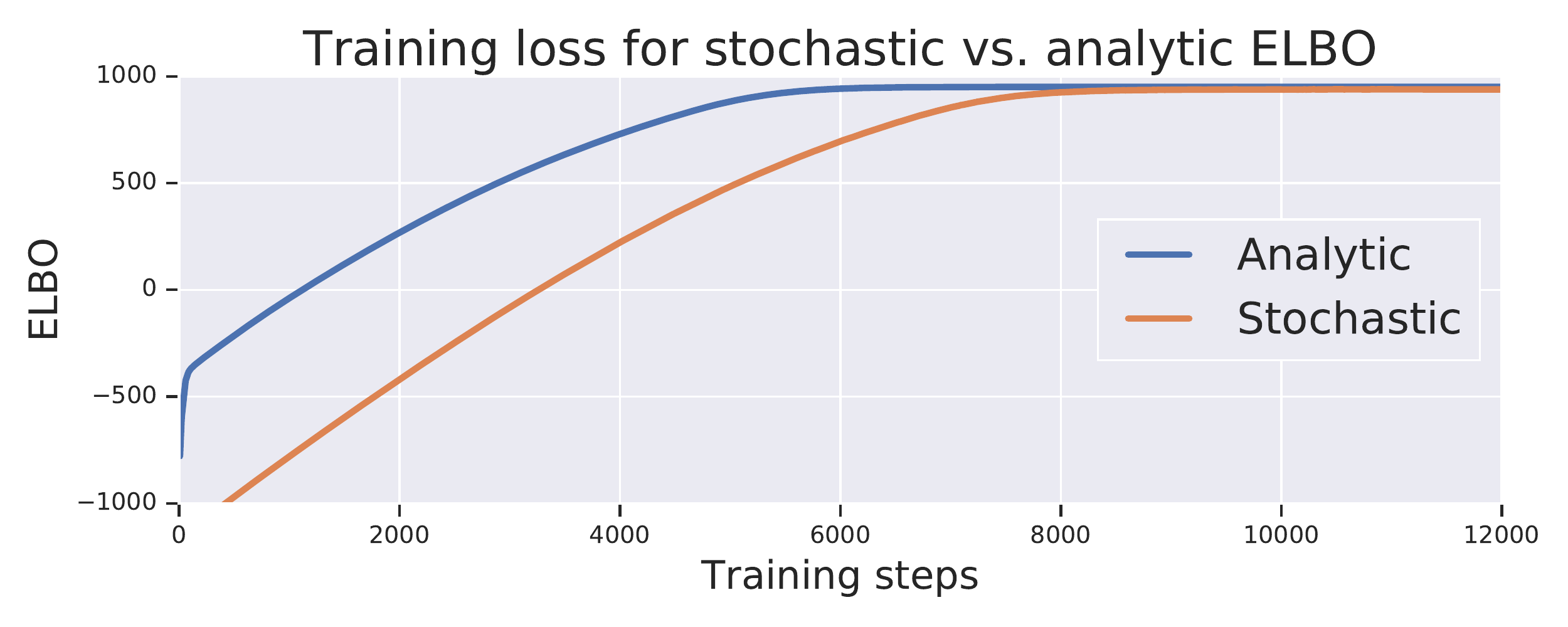}
    \captionof{figure}{\small Stochastic vs analytic ELBO training: using the analytic gradient of the ELBO led to faster convergence and better final ELBO (950.7 vs. 939.3).} 
    \label{fig:linear_vae_stochastic_analytic}
    \end{figure}
    \end{minipage}\hfill%
    \begin{minipage}{0.49\linewidth}
    \begin{figure}[H]
    \centering
    \includegraphics[width=\linewidth]{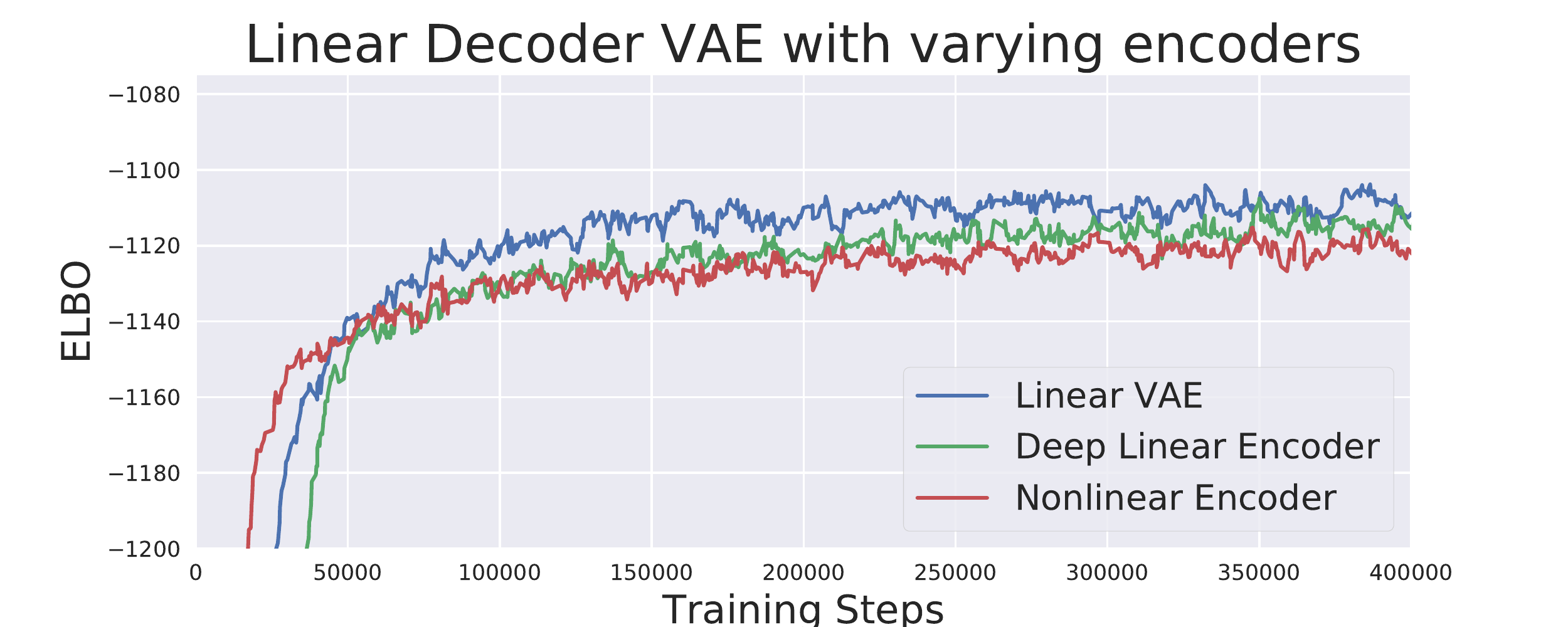}
    \captionof{figure}{\small VAEs with linear decoders trained on real-valued MNIST with nonlinear preprocessing \citep{NIPS2017_6828}. Final average ELBO on training set are (ordered by legend): -1098.2, -1108.7, -1112.1, -1119.6.
    } 
    \label{fig:linear_decoder_only}
    \end{figure}
\end{minipage}

\paragraph{Effect of stochastic ELBO estimates} In general, we are unable to compute the ELBO in closed form and so instead rely on unbiased Monte Carlo estimates using the reparameterization trick. These estimates add high-variance noise and can make optimization more challenging \citep{kingma2013auto}. In the linear model, we can compare the solutions obtained using the stochastic ELBO gradients versus the analytic ELBO\footnote{We use 1000 MNIST images, as before, to enable full-batch training so that the only source of noise is from the reparameterization trick \citep{kingma2013auto}} (Figure~\ref{fig:linear_vae_stochastic_analytic}). Additional experimental details are in Appendix~\ref{app:experiment_details}. We found that stochastic optimization had slower convergence (when compared to analytic training with the same learning rate) and, unsurprisingly, reached a worse final training ELBO value (in other words, worse steady-state risk due to the gradient variance).

\paragraph{Nonlinear Encoders} With a linear decoder and nonlinear encoder, Lemma 1 still holds, and the optimal variational distribution is the same as the true posterior has not changed. However, Corollary 1 and Theorem 1 no longer hold in general. Even a deep linear encoder will not have a unique global maximum and new stationary points (possibly maxima) may be introduced to ELBO in general. To investigate how deeper networks may impact optimization of the probabilistic model, we trained linear decoders with varying encoders using ELBO. We do not expect the linear encoder to be outperformed and indeed the empirical results support this (Figure~\ref{fig:linear_decoder_only}).

\subsection{Investigating posterior collapse in deep nonlinear VAEs}
\label{sec:experiments:posterior_collapse}
\vspace{-.1cm}
\label{sec:post_collapse_mnist_vae}

We explored how the analysis of the linear VAEs extends to deep nonlinear models. To do so, we trained VAEs with Gaussian observation models on the MNIST \citep{lecun1998mnist} and CelebA \citep{liu2015faceattributes} datasets. We apply uniform dequantization as in \citet{NIPS2017_6828} in each case. We also adopt the nonlinear logit preprocessing transformation from \citet{NIPS2017_6828} to provide fair comparisons with existing work. We also report results of models trained directly in pixel space in the appendix (there is no significant difference for the hypotheses we test).

\paragraph{Measuring posterior collapse} In order to measure the extent of posterior collapse, we introduce the following definition. We say that latent dimension dimension $i$ has $(\epsilon, \delta)$-collapsed if $\bbP_{\bx \sim p}[KL(q(z_i|\bx)||p(z_i)) < \epsilon] \geq 1-\delta$. Note that the linear VAE can suffer $(0,0)$-collapse. To estimate this practically, we compute the proportion of data samples which induce a variational distribution with KL divergence less than $\epsilon$ and finally report the percentage of dimensions which have $(\epsilon,\delta)$-collapsed. Throughout this work, we fix $\delta = 0.01$ and vary $\epsilon$.

\paragraph{Investigating $\sigma^2$}

We trained MNIST VAEs with 2 hidden layers in both the decoder and encoder, ReLU activations, and 200 latent dimensions.
We first evaluated training with fixed values of the observation noise, $\sigma^2$.
This mirrors many public VAE implementations where $\sigma^2$ is fixed to 1 throughout training (also observed by \citet{dai2018diagnosing}), however, our linear analysis suggests that this is suboptimal. Then, we consider the setting where the observation noise and VAE weights are learned simultaneously.

In Table~\ref{tab:summarized_eval_table} we report the final ELBO of nonlinear VAEs trained on real-valued MNIST. For fixed $\sigma^2$, we found that the final models could have significant differences in ELBO which were maintained even after tuning $\sigma^2$ to the learned representations --- the converged representations are less good when $\sigma^2$ is too large as predicted by the linear model. Additionally, we report the final ELBO values when the model is trained while learning $\sigma^2$ with different initial values of $\sigma^2$. The gap in performance across different initializations is smaller than for fixed $\sigma^2$ but is still significant. The linear VAE does not predict this gap which suggests that learning $\sigma^2$ correctly is more challenging in the nonlinear case.

\begin{table}[t]
\resizebox{\textwidth}{!}{%
\begin{tabular}{|c|d{2.3}|d{1.3}|l|@{\hspace*{.1cm}}|l|r|@{\hspace*{.1cm}}|r|r|  }
    \hline
    & \multicolumn{2}{|c|}{Model} & \multirow{2}{*}{ELBO} & \multirow{2}{*}{$\sigma^2$-tuned ELBO} & \multirow{2}{*}{Tuned $\sigma^2$} & \mc{Posterior} & \mc{KL} \\ \cline{2-3}
    & \mc{Init $\sigma^2$} & \mc{Final $\sigma^2$} & & & & \mc{collapse (\%)}  & \mc{Divergence} \\
     \hline
     \multirow{10}{*}{\rotatebox[origin=c]{90}{MNIST}}
& \multicolumn{2}{c|}{10.0} & $-1450.3 \pm 4.2$ & $-1098.2 \pm 28.3 $ & 1.797 & 89.88 & $28.8 \pm 1.4$\\
& \multicolumn{2}{c|}{1.0} & $-1022.1 \pm 5.4$ & $-1018.3 \pm 5.3 $ & 1.145 & 27.38 & $125.4 \pm 4.2$\\
& \multicolumn{2}{c|}{0.1} & $-3697.3 \pm 493.3$ & $-1190.8 \pm 37.4 $ & 0.968 & 3.25 & $368.7 \pm 94.6$\\
& \multicolumn{2}{c|}{0.01} & $-38612.5 \pm 1189.8$ & $-2090.8 \pm 975.1 $ & 0.877 & 0.00 & $695.9 \pm 118.1$\\
& \multicolumn{2}{c|}{0.001} & $-504259.1 \pm 49149.8$ & $-1744.7 \pm 48.4 $ & 0.810 & 0.00 & $756.2 \pm 12.6$\\
    \cline{2-8}
& 10.0 & 1.320 & $-1022.2 \pm 4.5$ & $-1022.3 \pm 4.6 $ & 1.318 & 73.75 & $73.8 \pm 9.8$\\
& 1.0 & 1.183 & $-1011.1 \pm 2.7$ & $-1011.1 \pm 2.8 $ & 1.182 & 47.88 & $106.3 \pm 2.5$\\
& 0.1 & 1.194 & $-1025.4 \pm 8.6$ & $-1025.4 \pm 8.6 $ & 1.195 & 29.25 & $116.1 \pm 11.4$\\
& 0.01 & 1.194 & $-1030.6 \pm 3.5$ & $-1030.5 \pm 3.5 $ & 1.191 & 23.00 & $121.9 \pm 7.7$\\
& 0.001 & 1.208 & $-1038.7 \pm 5.6$ & $-1038.8 \pm 5.6 $ & 1.209 & 27.00 & $124.9 \pm 1.6$\\
    \hline\hline
     \multirow{10}{*}{\rotatebox[origin=c]{90}{CELEBA 64}}
     & \multicolumn{2}{|c|}{10.0} & $-73328.4 \pm 0.49$ & $-55186.7 \pm 35.1$ & 0.2040 & 80.56 & $56.12 \pm 0.4$ \\
     & \multicolumn{2}{|c|}{1.0} & $-59841.8 \pm 30.1$ & $-51294.8 \pm 333.7$ & 0.1020 & 2.52 & $213.4 \pm 6.3$ \\
     & \multicolumn{2}{|c|}{0.1} & $-50760.3 \pm 353.4$ & $-50698.5 \pm 393.9$ & 0.0883 & 32.72 & $483.8 \pm 36.2$ \\
     & \multicolumn{2}{|c|}{0.01} & $-82478.7 \pm 1823.3$ & $-51373.9 \pm 213.3$ & 0.0817 & 0.00 & $1624.2 \pm 8.8$ \\
     & \multicolumn{2}{|c|}{0.001} & $-531924.5 \pm 17177.6$ & $-57381.5 \pm 512.6$ & 0.0296 & 0.00 & $2680.2 \pm 41.5$ \\
    \cline{2-8}
    & 10.0 & 0.0962 & $-51109.5 \pm 408.2$ & $-51109.5 \pm 408.3$ & 0.0963 & 53.32 & $364.5 \pm 26.4$ \\
    & 1.0 & 0.0875 & $-50631.2 \pm 163.4$ & $-50631.0 \pm 163.3$ & 0.0875 & 54.76 & $462.2 \pm 20.0$ \\
    & 0.1 & 0.0863 & $-50646.9 \pm 269.0$ & $-50645.9 \pm 267.5$ & 0.0869 & 28.84 & $520.9 \pm 11.7$ \\
    & 0.01 & 0.0911 & $-51285.0 \pm 708.1$ & $-51284.8 \pm 708.1$ & 0.0963 & 5.64 & $557.0 \pm 50.5$ \\
    & 0.001 & 0.1040 & $-51695.1 \pm 322.4$ & $-51694.8 \pm 322.7$ & 0.0974 & 0.00 & $537.5 \pm 46.2$ \\
    \hline\hline
\end{tabular}}
\caption{\footnotesize Evaluation of deep Gaussian VAEs (averaged over 5 trials) on real-valued MNIST. We report the ELBO on the training set in all cases. Collapse percent gives the percentage of latent dimensions which are within 0.01 KL of the prior for at least 99\% of the encoder inputs.}%
\label{tab:summarized_eval_table}
\end{table}

\begin{figure}
    \centering
    \includegraphics[width=0.98\linewidth]{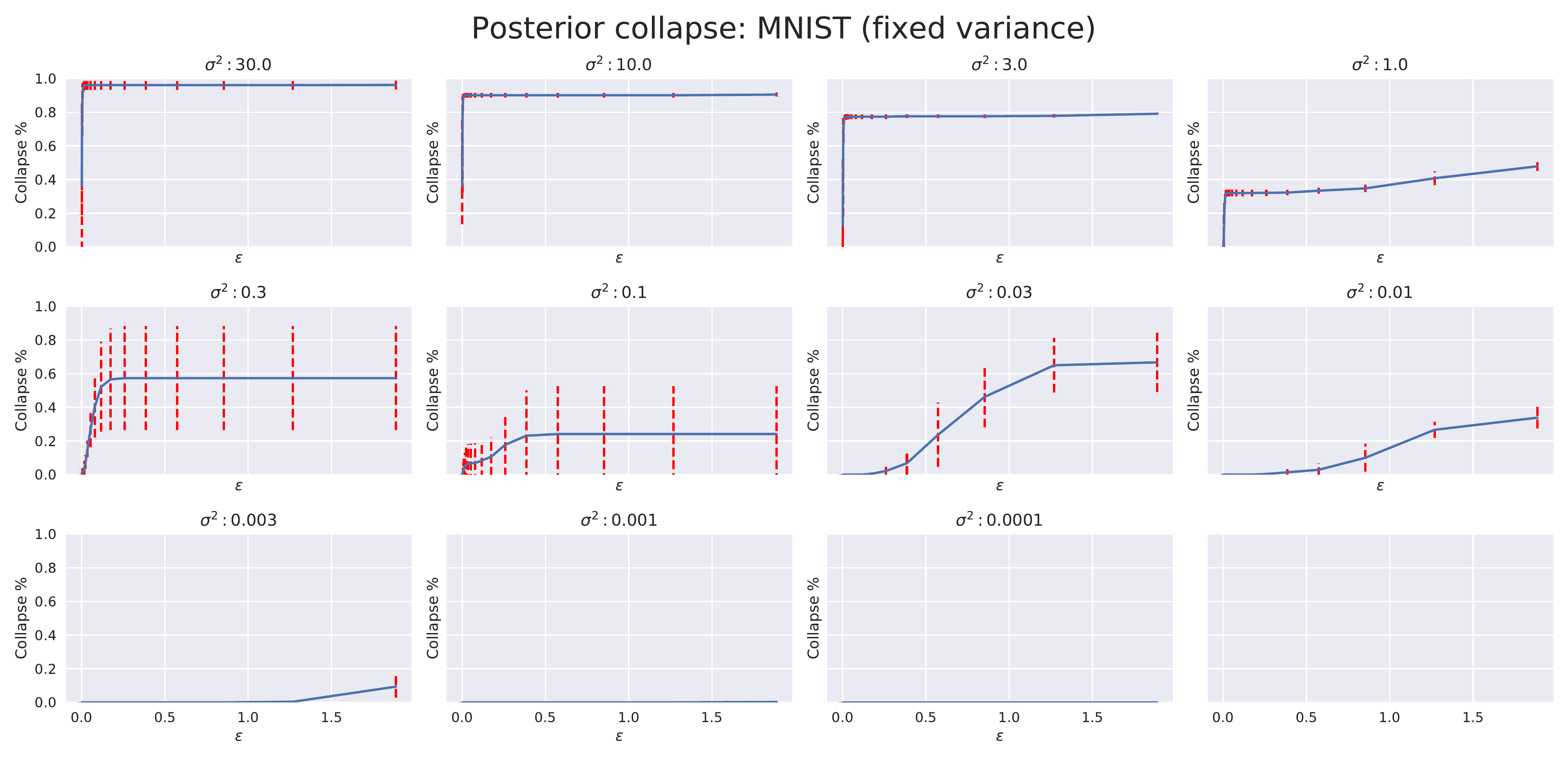}
    \caption{Posterior collapse percentage as a function of $\epsilon$-threshold for a deep VAE trained on MNIST. We measure posterior collapse for trained networks as the proportion of latent dimensions that are within $\epsilon$ KL divergence of the prior for at least a $1-\delta$ proportion of the training data points ($\delta=0.01$ in the plots).}
    \label{fig:posterior_collapse_thresholds_fixed}
\end{figure}

\begin{figure}
    \centering
    \includegraphics[width=0.98\linewidth]{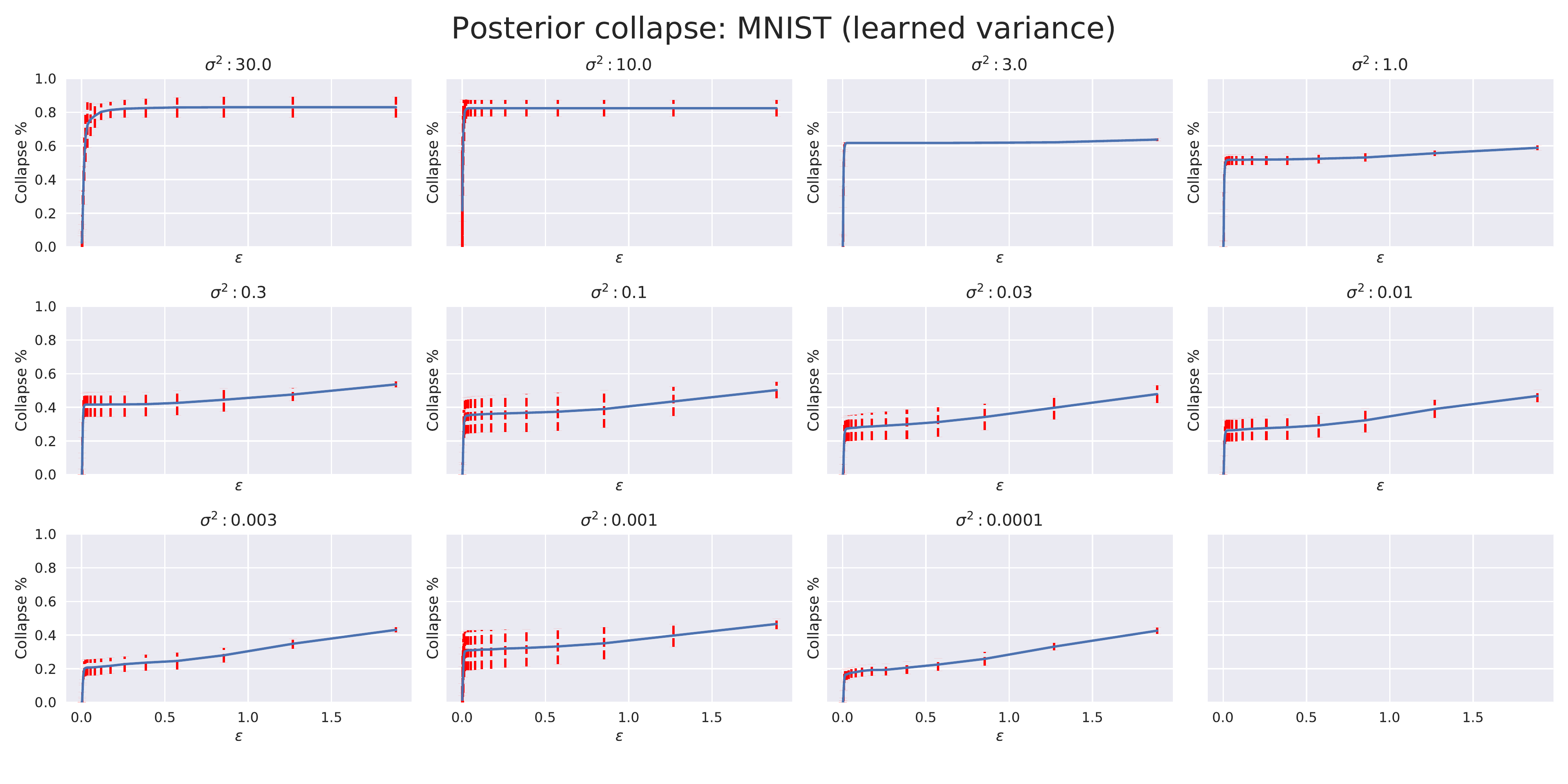}
    \caption{Posterior collapse percentage as a function of $\epsilon$-threshold for a deep VAE trained on MNIST. We measure posterior collapse for trained networks as the proportion of latent dimensions that are within $\epsilon$ KL divergence of the prior for at least a $1-\delta$ proportion of the training data points ($\delta=0.01$ in the plots).}
    \label{fig:posterior_collapse_thresholds_learned}
\end{figure}

Despite the large volume of work studying posterior collapse it has not been measured in a consistent way (or even defined so). In Figure~\ref{fig:posterior_collapse_thresholds_fixed} and Figure~\ref{fig:posterior_collapse_thresholds_learned} we measure posterior collapse for trained networks as described above (we chose $\delta=0.01$). By considering a range of $\epsilon$ values we found this was (moderately) robust to stochasticity in data preprocessing. We observed that for large choices of $\sigma^2$ initialization the variational distribution matches the prior closely. This was true even when $\sigma^2$ is learned --- suggesting that local optima may contribute to posterior collapse in deep VAEs.

\paragraph{CelebA VAEs} We trained deep convolutional VAEs with 500 hidden dimensions on images from the CelebA dataset (resized to 64x64). We trained the CelebA VAEs with different fixed values of $\sigma^2$ and compared the ELBO before and after tuning $\sigma^2$ to the learned representations (Table~\ref{tab:summarized_eval_table}). Further, we explored training the CelebA VAE while learning $\sigma^2$ over varied initializations of the observation noise. The VAE is sensitive to the initialization of the observation noise even when $\sigma^2$ is learned (in particular, in terms of the number of collapsed dimensions).

\section{Discussion}\label{sec:discussion}
\vspace{-.2cm}

By analyzing the correspondence between linear VAEs and pPCA, this paper makes
significant progress towards understanding the causes of posterior collapse.
We show that for simple linear VAEs posterior collapse is caused by ill-conditioning
of the stationary points in the log marginal likelihood objective.
We demonstrate empirically that the same optimization issues play a role in deep non-linear VAEs.
Finally, we find that linear VAEs are useful theoretical test-cases for evaluating existing hypotheses on VAEs
and we encourage researchers to consider studying their hypotheses in the linear VAE setting.

\section{Acknowledgements}

This work was guided by many conversations with and feedback from our colleagues. In particular, we thank Durk Kingma, Alex Alemi, and Guodong Zhang for invaluable feedback on early versions of this work.

\newpage
\bibliography{references}
\bibliographystyle{abbrvnat}

\newpage
\begin{appendix}

\section{Stationary points of pPCA}\label{app:ppca_stationary}

Here we briefly summarize the analysis of \citep{tipping1999probabilistic} with some simple additional observations. We recommend that interested readers study Appendix~A of \citet{tipping1999probabilistic} for the full details.  We begin by formulating the conditions for stationary points of $\sum_{\bx_i}\log p(\bx_i)$:

\begin{equation}
\bS \bC^{-1}\bW = \bW
\end{equation}

Where $\bS$ denotes the sample covariance matrix (assuming we set $\bmu = \bmu_{MLE}$, \emph{which we do throughout}), and $\bC = \bW\bW^T + \sigma^2 I$ (note that the dimensionality is different to $\mathbf{M}$). There are three possible solutions to this equation, (1) $\bW = \mathbf{0}$, (2) $\bC = \bS$, or (3) the more general solutions. (1) and (2) are not particularly interesting to us, so we focus herein on (3).

We can write $\bW = \bU\bL\bV^T$ using its singular value decomposition. Substituting back into the stationary points equation, we recover the following:

\begin{equation}
\bS\bU\bL = \bU(\sigma^2I + \bL^2)\bL
\end{equation}

Noting that $\bL$ is diagonal, if the $j^{th}$ singular value ($l_j$) is non-zero, this gives $\bS \bu_j = (\sigma^2 + l^2_j)\bu_j$, where $u_j$ is the $j^{th}$ column of $\bU$. Thus, $\bu_j$ is an eigenvector of $\bS$ with eigenvalue $\lambda_j = \sigma^2 + l^2_j$. For $l_j=0$, $\bu_j$ is arbitrary.

Thus, all potential solutions can be written as, $\bW = U_q(K_q-\sigma^2I)^{1/2}\bR$, with singular values written as $k_j = \sigma^2$ or $\sigma^2 + l^2_j$ and with $\bR$ representing an arbitrary orthogonal matrix.

From this formulation, one can show that the global optimum is attained with $\sigma^2 = \sigma^2_{MLE}$ and $U_q$ and $K_q$ chosen to match the leading singular vectors and values of $\bS$.

\subsection{Stability of stationary point solutions}

Consider stationary points of the form, $\bW = \bU_q(K_q - \sigma^2I)^{1/2}$ where $\bU_q$ contains arbitrary eigenvectors of $\bS$. In the original pPCA paper they show that all solutions except the leading principal components correspond to saddle points in the optimization landscape. However, this analysis depends critically on $\sigma^2$ being set to the true maximum likelihood estimate. Here we repeat their analysis, considering other (fixed) values of $\sigma^2$.

We consider a small perturbation to a column of $\bW$, of the form $\epsilon \bu_j$ . To analyze the stability of the perturbed solution, we check the sign of the dot-product of the perturbation with the likelihood gradient at $\mathbf{w}_i + \epsilon \bu_j$. Ignoring terms in $\epsilon^2$ we can write the dot-product as,

\begin{equation}
\epsilon N (\lambda_j / k_i - 1) \bu_j^T \bC^{-1} \bu _j
\end{equation}

Now, $\bC^{-1}$ is positive definite and so the sign depends only on $\lambda_j / k_i - 1$. The stationary point is stable (local maxima) only if the sign is negative.  If $k_i = \lambda_i$ then the maxima is stable only when $\lambda_i > \lambda_j$, in words, the top $q$ principal components are stable. However, we must also consider the case $k=\sigma^2$. \citet{tipping1999probabilistic} show that if $\sigma^2 = \sigma^2_{MLE}$, then this also corresponds to a saddle point as $\sigma^2$ is the average of the smallest eigenvalues meaning some perturbation will be unstable (except in a special case which is handled separately).

However, what happens if $\sigma^2$ is not set to be the maximum likelihood estimate? In this case, it is possible that there are no unstable perturbation directions (that is, $\lambda_j < \sigma^2$ for too many $j$). In this case when $\sigma^2$ is fixed, there are local optima where $\bW$ has zero-columns --- the same solutions that we observe in non-linear VAEs corresponding to posterior collapse. Note that when $\sigma^2$ is learned in non-degenerate cases the local maxima presented above become saddle points where $\sigma^2$ is made smaller by its gradient. In practice, we find that even when $\sigma^2$ is learned in the non-linear case local maxima exist.

\section{Identifiability of the linear VAE}\label{app:vae_identifiability}

Linear autoencoders suffer from a lack of identifiability which causes the decoder columns to span the principal component subspace instead of recovering it. \citet{kunin2019loss} showed that adding regularization to the linear autoencoder improves the identifiability --- forcing the columns to be identified up to an arbitrary orthogonal transformation, as in pPCA. Here we show that linear VAEs are able to fully identify the principal components.

We once again consider the linear VAE from \eqref{eqn:linear_vae}:

\begin{align*}
    \begin{split}
        p(\bx \mid \bz) ~=~ \cN( \bW \bz + \bmu, \sigma^2 \bI), \\
        q(\bz \mid \bx) ~=~ \cN(\bV (\bx - \bmu), \bD),
    \end{split}
\end{align*}

The output of the VAE, $\tilde{\bx}$ is distributed as,

\[ \tilde{\bx}|\bx \sim \cN(\bW\bV(\bx-\bmu) + \bmu, \bW\bD\bW^T). \]

Therefore, the output of the linear VAE is invariant to the following transformation:

\begin{align}
    \begin{split}
        \bW &\leftarrow \bW \mathbf{A}, \\
        \bV &\leftarrow \mathbf{A}^{-1}\bV, \\
        \bD &\leftarrow \mathbf{A}^{-1} \bD \mathbf{A}^{-1},
    \end{split}
\end{align}

where $\mathbf{A}$ is a diagonal matrix with non-zero entries so that $\bD$ is well-defined. However, this transformation changes the variational distribution which affects the loss through the KL term. As argued in Corollary 1, this means that the global optimum is unique for ELBO up to ordering of the eigenvalues/eigenvectors.

At the global optimum, the ordering can be recovered by computing the squared Euclidean norm of the columns of $\bW$ (which correspond to the singular values) and ordering according to these quantities. In other words, $\bR$ is a permutation matrix which can be computed exactly.
\section{Stationary points of ELBO}\label{app:elbo_stationary}

Here we present details on the analysis of the stationary points of the ELBO objective. To begin, we first derive closed-form solutions to the components of the log marginal likelihood (including the ELBO). The VAE we focus on is the one presented in \eqref{eqn:linear_vae}, with a linear encoder, linear decoder, Gaussian prior, and Gaussian observation model.

\subsection{Analytic ELBO of the Linear VAE}\label{app:elbo_stationary:analytic_elbo}

Remember that one can express the log marginal likelihood as:
\begin{equation}\label{eqn:marginal_loglik}
    \log p(\bx) = \overset{(A)}{KL(q(\bz|\bx)||p(\bz|\bx))} - \overset{(B)}{KL(q(\bz|\bx)||p(\bz))} + \overset{(C)}{\expect_{q(\bz|\bx)}\left[ \log p(\bx|\bz) \right]}.
\end{equation}
Each of the terms (A-C) can be expressed in closed form for the linear VAE. Note that the KL term (A) is minimized when the variational distribution is exactly the true posterior distribution. This is possible when the columns of the decoder are orthogonal.

The term (B) can be expressed as,
\begin{equation}
    KL(q(\bz|\bx)||p(z)) = 0.5(-\log\det \bD + (\bx - \bmu)^T\bV^T\bV(\bx - \bmu) + tr(\bD) - q).
\end{equation}

The term (C) can be expressed as,
\begin{align}
\expect_{q(\bz|\bx)}\left[ \log p(\bx|\bz) \right] &= \expect_{q(\bz|\bx)}\left[ -(\bW\bz - (\bx - \bmu))^T(\bW\bz - (\bx - \bmu))/2\sigma^2 - \frac{d}{2}\log 2\pi\sigma^2 \right ] \\
&= \expect_{q(\bz|\bx)}\left[ \frac{-(\bW\bz)^T(\bW\bz) + 2(\bx - \bmu)^T\bW\bz - (\bx - \bmu)^T(\bx - \bmu)}{2\sigma^2} - \frac{d}{2}\log 2\pi\sigma^2 \right ].
\end{align}
Noting that $\bW\bz \sim \cN\left(\bW\bV(\bx-\bmu), \bW\bD\bW^T\right)$, we can compute the expectation analytically and obtain,
\begin{align}
\expect_{q(\bz|\bx)}\left[ \log p(\bx|\bz) \right] &= \frac{1}{2\sigma^2} [ -tr(\bW\bD\bW^T) - (\bx - \bmu)^T\bV^T\bW^T\bW\bV(\bx-\bmu) \\
&+ 2(\bx - \bmu)^T\bW\bV(\bx-\bmu) - (\bx - \bmu)^T(\bx - \bmu)] - \frac{d}{2}\log 2\pi\sigma^2.
\end{align}

\subsection{Finding stationary points}
To compute the stationary points we must take derivatives with respect to $\bmu, \bD, \bW, \bV, \sigma^2$. As before, we have $\bmu=\bmu_{MLE}$ at the global maximum and for simplicity we fix $\bmu$ here for the remainder of the analysis.

Taking the marginal likelihood over the whole dataset, at the stationary points we have,

\begin{align}
\frac{\partial}{\partial \bD} (-(B) + (C)) &= \frac{N}{2}(\bD^{-1} - \bI - \frac{1}{\sigma^2}\text{diag}(\bW^T\bW)) = 0 \label{eqn:D_stationary} \\
\frac{\partial}{\partial \bV} (-(B) + (C)) &= \frac{N}{\sigma^2}(\bW^T - (\bW^T\bW + \sigma^2\bI)\bV)\bS = 0 \label{eqn:V_stationary} \\ 
\frac{\partial}{\partial \bW} (-(B) + (C)) &= \frac{N}{\sigma^2}(\bS\bV^T - \bD\bW-\bW\bV\bS\bV^T) = 0 \label{eqn:W_stationary}
\end{align}

The above are computed using standard matrix derivative identities \citep{petersen2008matrix}. These equations yield the expected solution for the variational distribution directly. From \eqref{eqn:D_stationary} we compute $\bD^* = \sigma^2(\text{diag}(\bW^T\bW) + \sigma^2 \bI)^{-1}$ and $\bV^* = \bM^{-1}\bW^T$, recovering the true posterior mean in all cases and getting the correct posterior covariance when the columns of $\bW$ are orthogonal. We will now proceed with the proof of Theorem~\ref{thm:no_elbo_maxima}.

\elbomaxima*

\begin{proof}
If the columns of $\bW$ are orthogonal then the log marginal likelihood is recovered exactly at all stationary points. This is a direct consequence of the posterior mean \emph{and} covariance being recovered exactly at all stationary points so that (1) is zero.

We must give separate treatment to the case where there is a stationary point without orthogonal columns of $\bW$. Suppose we have such a stationary point, using the singular value decomposition we can write $\bW = \bU\bL\bR^T$, where $\bU$ and $\bR$ are orthogonal matrices. Note that $\log p(\bx)$ is invariant to the choice of $\bR$ \citep{tipping1999probabilistic}. However, the choice of $\bR$ does affect the first term (1) of \eqref{eqn:marginal_loglik}: this term is minimized when $\bR = \bI$, and thus the ELBO must increase.

To formalize this argument, we compute (1) at a stationary point. From above, at every stationary point the mean of the variational distribution exactly matches the true posterior. Thus the KL simplifies to:

\begin{align}
KL(q(\bz|\bx)||p(\bz|\bx)) &= \frac{1}{2}\left( tr(\frac{1}{\sigma^2} \bM \bD) -q +q\log\sigma^2-\log(\det\bM\det\bD) \right), \\
&= \frac{1}{2}\left( tr(\bM \widetilde{\bM}^{-1}) -q -\log\frac{\det\bM}{\det\widetilde{\bM}} \right), \\
&= \frac{1}{2}\left( \sum_{i=1}^{q}\frac{\bM_{ii}}{\bM_{ii}} - q - \log\det\bM + \log\det\widetilde{\bM}
\right), \\
&= \frac{1}{2}\left( \log\det\widetilde{\bM} - \log\det\bM\right), \\
\end{align}

where $\widetilde{\bM} = \text{diag}(\bW^T\bW) + \sigma^2 \bI$. Now consider applying a small rotation to $\bW$: $\bW \mapsto \bW\bR_\epsilon$. As the optimal $\bD$ and $\bV$ are continuous functions of $\bW$, this corresponds to a small perturbation of these parameters too for a sufficiently small rotation. Importantly, $\log\det\bM$ remains fixed for any orthogonal choice of $\bR_\epsilon$ but $\log\det\widetilde{\bM}$ does not. Thus, we choose $\bR_\epsilon$ to minimize this term. In this manner, (1) shrinks meaning that the ELBO (-2)+(3) must increase. Thus if the stationary point existed, it must have been a saddle point.

We now describe how to construct such a small rotation matrix. First note that without loss of generality we can assume that $\det(\bR) = 1$. (Otherwise, we can flip the sign of a column of $\bR$ and the corresponding column of $\bU$.) And additionally, we have $\bW\bR=\bU\bL$, which is orthogonal.

The Special Orthogonal group of determinant 1 orthogonal matrices is a compact, connected Lie group and therefore the exponential map from its Lie algebra is surjective. This means that we can find an upper-triangular matrix $\mathbf{B}$, such that $\bR = \exp \{ \mathbf{B} - \mathbf{B}^T\}$. Consider $\bR_\epsilon = \exp \{ \frac{1}{n(\epsilon)}(\mathbf{B} - \mathbf{B}^T)\}$, where $n(\epsilon)$ is an integer chosen to ensure that the elements of $\mathbf{B}$ are within $\epsilon > 0$ of zero. This matrix is a rotation in the direction of $\bR$ which we can make arbitrarily close to the identity by a suitable choice of $\epsilon$. This is verified through the Taylor series expansion of $\bR_\epsilon = I + \frac{1}{n(\epsilon)}(\mathbf{B} - \mathbf{B}^T) + O(\epsilon^2)$. Thus, we have identified a small perturbation to $\bW$ (and $\bD$ and $\bV$) which decreases the posterior KL (A) but keeps the log marginal likelihood constant. Thus, the ELBO increases and the stationary point must be a saddle point.
\end{proof}

\subsection{Bernoulli Probabilistic PCA}\label{app:bernoulli_ppca}

We would like to extend our linear analysis to the case where we have a Bernoulli observation model, as this setting also suffers severely from posterior collapse. The analysis may also shed light on more general categorical observation models which have also been used. Typically, in these settings a continuous latent space is still used (for example, \citet{bowman2015generating}).

We will consider the following model,

\begin{align}
    \begin{split}
        p(\bz) = \cN(0, \bI),\\
        p(\bx|\bz) = \text{Bernoulli}(\by), \\
        \by = \sigma(\bW\bz + \bmu)
    \end{split}
\end{align}

where $\sigma$ denotes the sigmoid function, $\sigma(y) = 1 / (1 + \exp(-y))$ and we assume an independent Bernoulli observation model over $\bx$.

Unfortunately, under this model it is difficult to reason about the stationary points. There is no closed form solution for the marginal likelihood $p(\bx)$ or the posterior distribution $p(\bz|\bx)$. Numerical integration methods exist which may make it easy to evaluate this quantity in practice but they will not immediately provide us a good gradient signal.

We can compute the density function for $\by$ using the change of variables formula. Noting that $\bW\bz + \bmu \sim \cN(\bmu, \bW\bW^T)$, we recover the following logit-Normal distribution:

\begin{equation}
    f(\by) = \frac{1}{\sqrt{2\pi|\bW\bW^T|}}\frac{1}{\Pi_i y_i(1-y_i)}\exp \{-\frac{1}{2}\left(\log(\frac{\by}{1-\by}) - \bmu\right)^T(\bW\bW^T)^{-1}\left(\log(\frac{\by}{1-\by}) - \bmu\right)\}
\end{equation}

We can write the marginal likelihood as,

\begin{align}
    p(\bx) &= \int p(\bx|\bz)p(\bz) d\bz, \\
    &= \expect_\bz \left[ \by(\bz)^\bx (1 - \by(\bz))^{1-\bx} \right],
\end{align}

where $(\cdot)^\bx$ is taken to be elementwise. Unfortunately, the expectation of a logit-normal distribution has no closed form \citep{atchison1980logistic} and so we cannot tractably compute the marginal likelihood.

Similarly, under ELBO we need to compute the expected reconstruction error. This can be written as,

\begin{equation}
\expect_{q(\bz|\bx)}[\log p(\bx|\bz)] = \int \by(\bz)^\bx (1 - \by(\bz))^{1-\bx} \cN(\bz; \bV(\bx - \bmu), \bD) d\bz,
\end{equation}

another intractable integral.

\section{Related Work (Extended)}\label{app:related}
Due to the large volume of work studying posterior collapse in variational autoencoders, we have included here an extended discussion of related work. We utilize this additional space to provide a more in-depth discussion of the related work presented in the main paper and to highlight additional work.

\citet{tomczak2017vae} introduce the VampPrior, a hierarchical learned prior for VAEs. \citet{tomczak2017vae} show empirically that such a learned prior can mitigate posterior collapse (which they refer to as inactive stochastic units). While the authors provide limited theoretical support for the efficacy of their method in reducing posterior collapse, they claim intuitively that by enabling multi-modal prior distributions the KL term is less likely to force inactive units --- possibly by reducing the impact of local optima corresponding to posterior collapse.

In the main paper we discuss the work of \citet{dai2017hidden}, which connect robust PCA methods and VAEs. In particular, Section~2 of their manuscript studies the case of a linear decoder and shows that, when the encoder takes the form of the optimal variational distribution, the ELBO of the resulting VAE collapses into the pPCA objective. We study the ELBO without optimality assumptions on the linear encoder and characterize the optimization landscape with no additional assumptions. They claim further that all minima of the (encoder-optimal) ELBO objective are globally optimal --- we show in fact that for a linear encoder there is a fully identifiable global optimum.

\citet{dai2018diagnosing} discuss the important of the observation noise, and in fact show that under some assumptions the optimal observation noise should shrink to zero (Theorem~4 in their work). These assumptions amount to the number of latent dimensions exceeding the dimensionality of the true data manifold. However, in the linear model (whose latent dimensions do not exceed the input space dimensionality) the optimal variance does not shrink towards zero and is instead given by the sum of the variance lost in the linear projection. Note that this does not violate the results of \citet{dai2018diagnosing}, but highlights the need to consider model capacity against data complexity, as in \citet{alemi2017fixing}.
\section{Experiment details}\label{app:experiment_details}

We used Tensorflow \citep{tensorflow2015-whitepaper} for our experiments with linear and deep VAEs. In each case, the models were trained using a single GPU.

\paragraph{Visualizing stationary points of pPCA} For this experiment we computed the pPCA MLE using a subset of 1000 random training images from the MNIST dataset. We evaluate and plot the log marginal likelihood in closed form on this same subset. In this case, we did not dequantize or apply any nonlinear processing to the data.

\paragraph{Stochastic vs. Analytic VAE} We trained linear VAEs with 200 hidden dimensions. We used full-batch training with 1000 MNIST digits samples randomly from the training set (the same data as used to produce Figure~\ref{fig:linear_vae_optimal}). We trained each model with the Adam optimizer and a fixed learning rate, grid searching to find the learning rate which gave the best ELBO after 12000 training steps in the range $\{ 0.0001, 0.0003, 0.001, 0.003\}$. For both models, 0.001 provided the best final ELBO.

\paragraph{MNIST VAE} The VAEs we trained on MNIST all had the same architecture: 784-1024-512-k-512-1024-784.  The Gaussian likelihood is fairly uncommon for this dataset, which is nearly binary, but it provides a good setting for us to investigate our theoretical findings. To dequantize the data, we added uniform random noise and rescaled the pixel values to be in the range $[0,1]$. We then applied a nonlinear logistic transform as in \citep{NIPS2017_6828}. The VAE parameters were optimized jointly using the Adam optimizer \citep{kingma2014adam}. We trained the VAE for 1000 epochs total, keeping the learning rate fixed throughout. We performed a grid search over learning rates in the range $\{0.0001, 0.0003, 0.001, 0.003 \}$ and reported results for the model which achieved the best training ELBO.

\paragraph{CelebA VAE} We used the convolutional architecture proposed by \citet{higgins2016beta} trained on 64x64 images from the CelebA dataset \citep{liu2015faceattributes}. Otherwise, the experimental procedure followed that of the MNIST VAEs with the nonlinear preprocessing hyperparameters set as in \citep{NIPS2017_6828}.

\subsection{Additional results}
\subsubsection{Evaluating KL Annealing}

We found that KL-annealing may provide temporary relief from posterior collapse but that if $\sigma^2$ is not learned simultaneously then the collapsed solution is recovered. In Figure~\ref{fig:fixed_sigma_beta_anneal_kl} we show the proportion of units collapsed by threshold for several fixed choices of $\sigma^2$ when $\beta$ is annealed from 0 to 1 over the first 100 epochs. The solid lines correspond to the final model while the dashed line corresponds to the model at 80 epochs of training. KL-annealing was able to reduce posterior collapse initially but eventually fell back to the collapsed solution.

\begin{figure}[t]
 \includegraphics[width=0.6\linewidth]{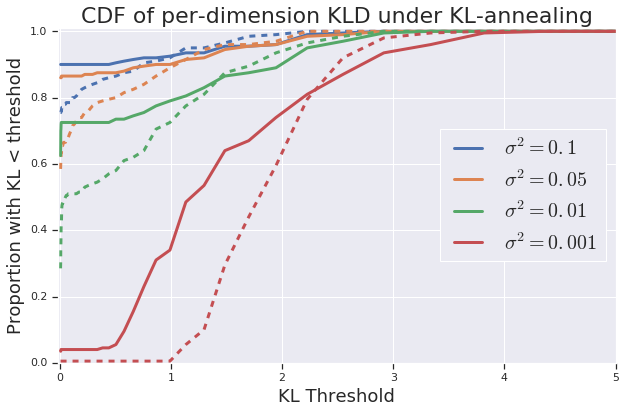}%
 \caption{\small Proportion of inactive units thresholded by KL divergence when using 0-1 KL-annealing and a fixed value of $\sigma^2$. The solid line represents the final model while the dashed line is the model after only 80 epochs of training. KL annealing reduces posterior collapse during the early stages of training but ultimately fails to escape these sub-optimal solutions as the KL weight is increased.
 }
\label{fig:fixed_sigma_beta_anneal_kl}%
\end{figure}

\begin{figure}[t]
    \centering
    \includegraphics[width=\linewidth]{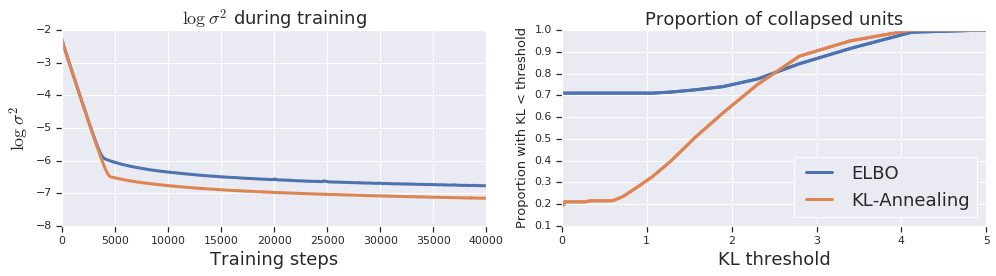}
    \caption{\small Comparing learned solutions using KL-Annealing versus standard ELBO training when $\sigma^2$ is learned.}
    \label{fig:learn_sigma_annealing}
\end{figure}

After finding that KL-annealing alone was insufficient to prevent posterior collapse we explored KL annealing while learning $\sigma^2$. Based on our analysis in the linear case we expect that this should work well: while $\beta$ is small the model should be able to learn to reduce $\sigma^2$. We trained using the same KL schedule and also with standard ELBO while learning $\sigma^2$. The results are presented in Figure~\ref{fig:learn_sigma_annealing} and Figure~\ref{fig:celeba_learn_sigma}. Under the ELBO objective, $\sigma^2$ is reduced somewhat but ultimately a large degree of posterior collapse is present. Using KL-annealing, the VAE is able to learn a much smaller $\sigma^2$ value and ultimately reduces posterior collapse. This suggests that the non-linear VAE dynamics may be similar to the linear case when suitably conditioned.

\begin{figure}[t]
    \centering
    \includegraphics[width=0.9\linewidth]{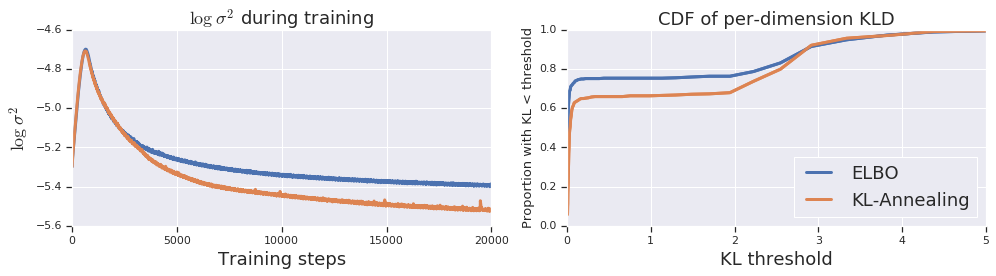}
    \caption{\small Learning $\sigma^2$ for CelebA VAEs with standard ELBO training and KL-Annealing. KL-Annealing enables a smaller $\sigma^2$ to be learned and reduces posterior collapse.} 
    \label{fig:celeba_learn_sigma}
\end{figure}

\subsubsection{Full results tables}

\begin{table}[t]
\resizebox{\textwidth}{!}{%
\begin{tabular}{|c|c|c|l|@{\hspace*{.1cm}}|l|r|@{\hspace*{.1cm}}|r|l|  }
    \hline
    & \multicolumn{2}{|c|}{Model} & \multirow{2}{*}{ELBO} & \multirow{2}{*}{$\sigma^2$-tuned ELBO} & \multirow{2}{*}{Tuned $\sigma^2$} & Posterior & KL \\ \cline{2-3}
    & Init $\sigma^2$ & Final $\sigma^2$ & & & & collapse (\%)  & Divergence \\
     \hline
     \multirow{20}{*}{\rotatebox[origin=c]{90}{MNIST}} & \multicolumn{2}{|c|}{30.0} & $-1850.4 \pm 29.0$ & $-1374.9 \pm 199.0 $ & 4.451 & 95.00 & $10.9 \pm 6.7$\\
& \multicolumn{2}{|c|}{10.0} & $-1450.3 \pm 4.2$ & $-1098.2 \pm 28.3 $ & 1.797 & 89.88 & $28.8 \pm 1.4$\\
& \multicolumn{2}{|c|}{3.0} & $-1114.9 \pm 1.1$ & $-1018.8 \pm 1.0 $ & 1.361 & 76.75 & $58.5 \pm 1.4$\\
& \multicolumn{2}{|c|}{1.0} & $-1022.1 \pm 5.4$ & $-1018.3 \pm 5.3 $ & 1.145 & 27.38 & $125.4 \pm 4.2$\\
& \multicolumn{2}{|c|}{0.3} & $-1816.7 \pm 270.6$ & $-1104.6 \pm 6.2 $ & 1.275 & 2.00 & $179.3 \pm 85.9$\\
& \multicolumn{2}{|c|}{0.1} & $-3697.3 \pm 493.3$ & $-1190.8 \pm 37.4 $ & 0.968 & 3.25 & $368.7 \pm 94.6$\\
& \multicolumn{2}{|c|}{0.03} & $-18549.3 \pm 4892.0$ & $-1283.2 \pm 63.3 $ & 1.470 & 0.00 & $305.3 \pm 75.4$\\
& \multicolumn{2}{|c|}{0.01} & $-38612.5 \pm 1189.8$ & $-1403.1 \pm 21.0 $ & 1.006 & 0.00 & $560.9 \pm 32.4$\\
& \multicolumn{2}{|c|}{0.003} & $-139538.8 \pm 21148.5$ & $-2090.8 \pm 975.1 $ & 0.877 & 0.00 & $695.9 \pm 118.1$\\
& \multicolumn{2}{|c|}{0.001} & $-504259.1 \pm 49149.8$ & $-1744.7 \pm 48.4 $ & 0.810 & 0.00 & $756.2 \pm 12.6$\\
    \cline{2-8}
    & 30.0 & 1.478 & $-1060.9 \pm 23.1$ & $-1061.0 \pm 23.0 $ & 1.476 & 33.75 & $70.9 \pm 13.8$\\
& 10.0 & 1.32 & $-1022.2 \pm 4.5$ & $-1022.3 \pm 4.6 $ & 1.318 & 73.75 & $73.8 \pm 9.8$\\
& 3.0 & 1.178 & $-1004.6 \pm 1.4$ & $-1004.5 \pm 1.3 $ & 1.181 & 58.38 & $99.8 \pm 1.5$\\
& 1.0 & 1.183 & $-1011.1 \pm 2.7$ & $-1011.1 \pm 2.8 $ & 1.182 & 47.88 & $106.3 \pm 2.5$\\
& 0.3 & 1.195 & $-1020.0 \pm 6.0$ & $-1019.9 \pm 6.1 $ & 1.191 & 37.75 & $111.6 \pm 6.1$\\
& 0.1 & 1.194 & $-1025.4 \pm 8.6$ & $-1025.4 \pm 8.6 $ & 1.195 & 29.25 & $116.1 \pm 11.4$\\
& 0.03 & 1.197 & $-1030.6 \pm 6.6$ & $-1030.5 \pm 6.6 $ & 1.198 & 22.62 & $120.2 \pm 10.5$\\
& 0.01 & 1.194 & $-1030.6 \pm 3.5$ & $-1030.5 \pm 3.5 $ & 1.191 & 23.00 & $121.9 \pm 7.7$\\
& 0.003 & 1.19 & $-1033.7 \pm 2.3$ & $-1033.6 \pm 2.3 $ & 1.187 & 16.62 & $126.4 \pm 6.8$\\
& 0.001 & 1.208 & $-1038.7 \pm 5.6$ & $-1038.8 \pm 5.6 $ & 1.209 & 27.00 & $124.9 \pm 1.6$\\
    \hline\hline
\end{tabular}}
\caption{\footnotesize Full evaluation of deep Gaussian VAEs (averaged over 5 trials) on real-valued MNIST with nonlinear preprocessing \citep{NIPS2017_6828}. Collapse percent gives the percentage of latent dimensions which are within 0.01 KL of the prior for at least 99\% of the encoder inputs.}%
\label{tab:full_logit_mnist_eval}
\end{table}

\begin{table}[t]
\resizebox{\textwidth}{!}{%
\begin{tabular}{|c|c|c|l|@{\hspace*{.1cm}}|l|r|@{\hspace*{.1cm}}|r|l|  }
    \hline
    & \multicolumn{2}{|c|}{Model} & \multirow{2}{*}{ELBO} & \multirow{2}{*}{$\sigma^2$-tuned ELBO} & \multirow{2}{*}{Tuned $\sigma^2$} & Posterior & KL \\ \cline{2-3}
    & Init $\sigma^2$ & Final $\sigma^2$ & & & & collapse (\%)  & Divergence \\
     \hline
     \multirow{20}{*}{\rotatebox[origin=c]{90}{CELEBA 64}} & \multicolumn{2}{|c|}{30.0} & $-79986.2 \pm 0.10$ & $-57883.8 \pm 19.3$ & 0.423 & 93.68 & $26.0 \pm 0.2$ \\
     & \multicolumn{2}{|c|}{10.0} & $-73328.4 \pm 0.49$ & $-55186.7 \pm 35.1$ & 0.204 & 80.56 & $56.12 \pm 0.4$ \\
     & \multicolumn{2}{|c|}{3.0} & $-66145.6 \pm 2.44$ & $-52828.5 \pm 58.6$  & 0.132 & 20.64 & $120.4 \pm 1.4$ \\
     & \multicolumn{2}{|c|}{1.0} & $-59841.8 \pm 30.1$ & $-51294.8 \pm 333.7$ & 0.102 & 2.52 & $213.4 \pm 6.3$ \\
     & \multicolumn{2}{|c|}{0.3} & $-54370.4 \pm 849.9$ & $-52155.2 \pm 1855.2$ & 0.122 & 74.52 & $267.2 \pm 51.9$ \\
     & \multicolumn{2}{|c|}{0.1} & $-50760.3 \pm 353.4$ & $-50698.5 \pm 393.9$ & 0.0883 & 32.72 & $483.8 \pm 36.2$ \\
     & \multicolumn{2}{|c|}{0.03} & $-64322.8 \pm 312.9$ & $-58077.9 \pm 206.2$ & 0.0463 & 0.00 & $1521.1 \pm 11.6$  \\
     & \multicolumn{2}{|c|}{0.01} & $-82478.7 \pm 1823.3$ & $-51373.9 \pm 213.3$ & 0.0817 & 0.00 & $1624.2 \pm 8.8$ \\
     & \multicolumn{2}{|c|}{0.003} & $-192967.7 \pm 4410.4$ & $-51978.4 \pm 159.3$ & 0.0685 & 0.00 & $2108.4 \pm 26.2$ \\
     & \multicolumn{2}{|c|}{0.001} & $-531924.5 \pm 17177.6$ & $-57381.5 \pm 512.6$ & 0.0296 & 0.00 & $2680.2 \pm 41.5$ \\
    \cline{2-8}
    & 30.0 & 0.478 & $-57773.0 \pm 3622.9$ & $-56068.5 \pm 2771.0$ & 0.475 & 14.20 & $221.7 \pm 99.0$ \\
    & 10.0 & 0.0962 & $-51109.5 \pm 408.2$ & $-51109.5 \pm 408.3$ & $0.0963$ & 53.32 & $364.5 \pm 26.4$ \\
    & 3.0 & 0.0891 & $-50813.2 \pm 229.7$ & $-50813.3 \pm 229.7$ & 0.0889 & 10.96 & $545.2 \pm 5.5$  \\
    & 1.0 & 0.0875 & $-50631.2 \pm 163.4$ & $-50631.0 \pm 163.3$ & 0.0875 & 54.76 & $462.2 \pm 20.0$ \\
    & 0.3 & 0.0890 & $-50963.4 \pm 331.2$ & $-50963.2 \pm 331.3$ & 0.0892 & 7.96 & $670.7 \pm 79.2$ \\
    & 0.1 & 0.0863 & $-50646.9 \pm 269.0$ & $-50645.9 \pm 267.5$ & 0.0869 & 28.84 & $520.9 \pm 11.7$ \\
    & 0.03 & 0.121 & $-53263.4 \pm 71.5$ & $-53263.3 \pm 71.3$ & 0.126 & 0.00 & $856.2 \pm 19.7$ \\
    & 0.01 & 0.0911 & $-51285.0 \pm 708.1$ & $-51284.8 \pm 708.1$ & 0.0963 & 5.64 & $557.0 \pm 50.5$ \\
    & 0.003 & 0.0952 & $-51056.4 \pm 1216.9$ & $-51055.9 \pm 1217.4$ & 0.094 & 0.80 & $577.4 \pm 30.4$ \\
    & 0.001 & 0.104 & $-51695.1 \pm 322.4$ & $-51694.8 \pm 322.7$ & 0.0974 & 0.00 & $537.5 \pm 46.2$ \\
    \hline\hline
\end{tabular}}
\caption{\footnotesize Full evaluation of deep Gaussian VAEs (averaged over 5 trials) on real-valued CelebA with nonlinear preprocessing \citep{NIPS2017_6828}. Collapse percent gives the percentage of latent dimensions which are within 0.01 KL of the prior for at least 99\% of the encoder inputs.}%
\label{tab:full_logit_celeba_eval}
\end{table}

\begin{table}[t]
\resizebox{\textwidth}{!}{%
\begin{tabular}{|c|c|c|l|@{\hspace*{.1cm}}|l|r|@{\hspace*{.1cm}}|r|l|  }
    \hline
    & \multicolumn{2}{|c|}{Model} & \multirow{2}{*}{ELBO} & \multirow{2}{*}{$\sigma^2$-tuned ELBO} & \multirow{2}{*}{Tuned $\sigma^2$} & Posterior & KL \\ \cline{2-3}
    & Init $\sigma^2$ & Final $\sigma^2$ & & & & collapse (\%)  & Divergence \\
     \hline
     \multirow{20}{*}{\rotatebox[origin=c]{90}{MNIST}} & \multicolumn{2}{|c|}{30.0} & $-6402.0 \pm 0.0$ & $-6248.4 \pm 197.2 $ & 22.323 & 0.00 & $0.0 \pm 0.0$\\
& \multicolumn{2}{|c|}{10.0} & $-5973.1 \pm 0.0$ & $-5821.0 \pm 194.6 $ & 7.443 & 0.00 & $0.0 \pm 0.0$\\
& \multicolumn{2}{|c|}{3.0} & $-5507.1 \pm 0.1$ & $-5360.4 \pm 185.4 $ & 2.235 & 1.70 & $0.6 \pm 0.3$\\
& \multicolumn{2}{|c|}{1.0} & $-5087.9 \pm 3.1$ & $-4954.7 \pm 156.9 $ & 0.747 & 0.00 & $4.5 \pm 2.3$\\
& \multicolumn{2}{|c|}{0.3} & $-4638.4 \pm 3.6$ & $-4516.8 \pm 137.9 $ & 0.225 & 0.00 & $12.5 \pm 1.5$\\
& \multicolumn{2}{|c|}{0.1} & $-4243.1 \pm 17.6$ & $-4154.6 \pm 62.1 $ & 0.076 & 0.00 & $25.6 \pm 3.0$\\
& \multicolumn{2}{|c|}{0.03} & $-3820.7 \pm 13.9$ & $-3785.2 \pm 26.6 $ & 0.027 & 0.00 & $55.8 \pm 2.1$\\
& \multicolumn{2}{|c|}{0.01} & $-3508.4 \pm 12.3$ & $-3483.5 \pm 13.1 $ & 0.009 & 0.00 & $112.8 \pm 6.7$\\
& \multicolumn{2}{|c|}{0.003} & $-3267.3 \pm 2.6$ & $-3247.1 \pm 2.8 $ & 0.003 & 0.00 & $252.2 \pm 2.1$\\
& \multicolumn{2}{|c|}{0.001} & $-3137.7 \pm 5.2$ & $-3136.7 \pm 5.4 $ & 0.001 & 0.00 & $422.7 \pm 2.6$\\
    \cline{2-8}
    & 30.0 & 0.067 & $-4398.7 \pm 0.0$ & $-4398.7 \pm 0.0 $ & 0.067 & 0.00 & $0.0 \pm 0.0$\\
& 10.0 & 0.044 & $-4146.3 \pm 309.2$ & $-4146.3 \pm 309.2 $ & 0.044 & 0.00 & $30.1 \pm 36.9$\\
& 3.0 & 0.01 & $-3736.3 \pm 14.3$ & $-3736.4 \pm 14.3 $ & 0.010 & 0.00 & $73.7 \pm 1.9$\\
& 1.0 & 0.008 & $-3673.0 \pm 17.7$ & $-3672.9 \pm 17.7 $ & 0.008 & 0.00 & $85.2 \pm 2.5$\\
& 0.3 & 0.006 & $-3569.8 \pm 26.4$ & $-3569.8 \pm 26.4 $ & 0.006 & 0.00 & $100.8 \pm 3.7$\\
& 0.1 & 0.003 & $-3355.8 \pm 7.6$ & $-3355.8 \pm 7.6 $ & 0.003 & 0.00 & $151.7 \pm 2.4$\\
& 0.03 & 0.001 & $-3138.9 \pm 10.6$ & $-3139.0 \pm 10.6 $ & 0.001 & 0.00 & $275.4 \pm 3.1$\\
& 0.01 & 0.001 & $-3126.1 \pm 5.0$ & $-3126.1 \pm 5.0 $ & 0.001 & 0.00 & $349.3 \pm 5.4$\\
& 0.003 & 0.001 & $-3161.4 \pm 4.0$ & $-3161.3 \pm 4.0 $ & 0.001 & 0.00 & $373.5 \pm 7.5$\\
& 0.001 & 0.001 & $-3145.4 \pm 6.1$ & $-3145.4 \pm 6.1 $ & 0.001 & 0.00 & $378.4 \pm 7.7$\\
    \hline\hline
\end{tabular}}
\caption{\footnotesize Evaluation of deep Gaussian VAEs (averaged over 5 trials) on real-valued MNIST without any nonlinear preprocessing. Collapse percent gives the percentage of latent dimensions which are within 0.01 KL of the prior for at least 99\% of the encoder inputs.}%
\label{tab:full_pixel_mnist_eval}
\end{table}

\begin{table}[t]
\resizebox{\textwidth}{!}{%
\begin{tabular}{|c|c|c|l|@{\hspace*{.1cm}}|l|r|@{\hspace*{.1cm}}|r|l|  }
    \hline
    & \multicolumn{2}{|c|}{Model} & \multirow{2}{*}{ELBO} & \multirow{2}{*}{$\sigma^2$-tuned ELBO} & \multirow{2}{*}{Tuned $\sigma^2$} & Posterior & KL \\ \cline{2-3}
    & Init $\sigma^2$ & Final $\sigma^2$ & & & & collapse (\%)  & Divergence \\
     \hline
     \multirow{20}{*}{\rotatebox[origin=c]{90}{CELEBA 64}} & \multicolumn{2}{|c|}{30.0} & $-79986.2 \pm 0.10$ & $-57883.8 \pm 19.3$ & 0.423 & 93.68 & $26.0 \pm 0.19$ \\
     & \multicolumn{2}{|c|}{10.0} & $-73328.4 \pm 0.49$ & $-55186.7 \pm 35.1$ & 0.204 & 80.56 & $56.12 \pm 0.42$ \\
     & \multicolumn{2}{|c|}{3.0} & $-66145.6 \pm 2.44$ & $-52828.5 \pm 58.6$  & 0.132 & 20.64 & $120.4 \pm 1.37$ \\
     & \multicolumn{2}{|c|}{1.0} & $-59841.8 \pm 30.1$ & $-51294.8 \pm 333.7$ & 0.102 & 2.52 & $213.4 \pm 6.3$ \\
     & \multicolumn{2}{|c|}{0.3} & $-54370.4 \pm 849.9$ & $-52155.2 \pm 1855.2$ & 0.122 & 74.52 & $267.2 \pm 51.9$ \\
     & \multicolumn{2}{|c|}{0.1} & $-50760.3 \pm 353.4$ & $-50698.5 \pm 393.9$ & 0.0883 & 32.72 & $483.8 \pm 36.2$ \\
     & \multicolumn{2}{|c|}{0.03} & $-64322.8 \pm 312.9$ & $-58077.9 \pm 206.2$ & 0.0463 & 0.00 & $1521.1 \pm 11.6$  \\
     & \multicolumn{2}{|c|}{0.01} & $-82478.7 \pm 1823.3$ & $-51373.9 \pm 213.3$ & 0.0817 & 0.00 & $1624.2 \pm 8.78$ \\
     & \multicolumn{2}{|c|}{0.003} & $-192967.7 \pm 4410.4$ & $-51978.4 \pm 159.3$ & 0.0685 & 0.00 & $2108.4 \pm 26.2$ \\
     & \multicolumn{2}{|c|}{0.001} & $-531924.5 \pm 17177.6$ & $-57381.5 \pm 512.6$ & 0.0296 & 0.00 & $2680.2 \pm 41.45$ \\
    \cline{2-8}
    & 30.0 & 0.005 & $-53179.6 \pm 450.2$ & $-53179.6 \pm 450.3 $ & 0.005 & 0.00 & $302.8 \pm 29.8$\\
& 10.0 & 0.004 & $-51748.5 \pm 178.2$ & $-51748.5 \pm 178.2 $ & 0.004 & 0.00 & $482.3 \pm 24.7$\\
& 3.0 & 0.004 & $-51548.9 \pm 154.1$ & $-51548.9 \pm 154.2 $ & 0.004 & 0.00 & $489.5 \pm 21.8$\\
& 1.0 & 0.004 & $-51356.9 \pm 79.1$ & $-51356.9 \pm 79.1 $ & 0.004 & 0.00 & $516.3 \pm 18.0$\\
& 0.3 & 0.004 & $-51767.7 \pm 369.2$ & $-51767.7 \pm 369.1 $ & 0.004 & 22.00 & $439.7 \pm 33.3$\\
& 0.1 & 0.004 & $-51637.3 \pm 163.3$ & $-51637.1 \pm 163.5 $ & 0.004 & 0.00 & $577.3 \pm 13.5$\\
& 0.03 & 0.004 & $-51792.6 \pm 163.4$ & $-51792.6 \pm 163.6 $ & 0.004 & 45.48 & $484.6 \pm 22.6$\\
& 0.01 & 0.004 & $-51925.1 \pm 99.8$ & $-51924.9 \pm 99.8 $ & 0.004 & 0.00 & $627.8 \pm 20.6$\\
& 0.003 & 0.004 & $-52111.2 \pm 149.0$ & $-52111.0 \pm 148.8 $ & 0.004 & 42.80 & $466.9 \pm 13.9$\\
& 0.001 & 0.004 & $-52060.1 \pm 171.8$ & $-52060.0 \pm 171.9 $ & 0.004 & 0.0 & $645.6 \pm 19.2$\\
    \hline\hline
\end{tabular}}
\caption{\footnotesize Evaluation of deep Gaussian VAEs (averaged over 5 trials) on real-valued CelebA without any nonlinear preprocessing. Collapse percent gives the percentage of latent dimensions which are within 0.01 KL of the prior for at least 99\% of the encoder inputs.}%
\label{tab:full_pixel_celeba_eval}
\end{table}

\begin{figure}
    \centering
    \includegraphics[width=0.98\linewidth]{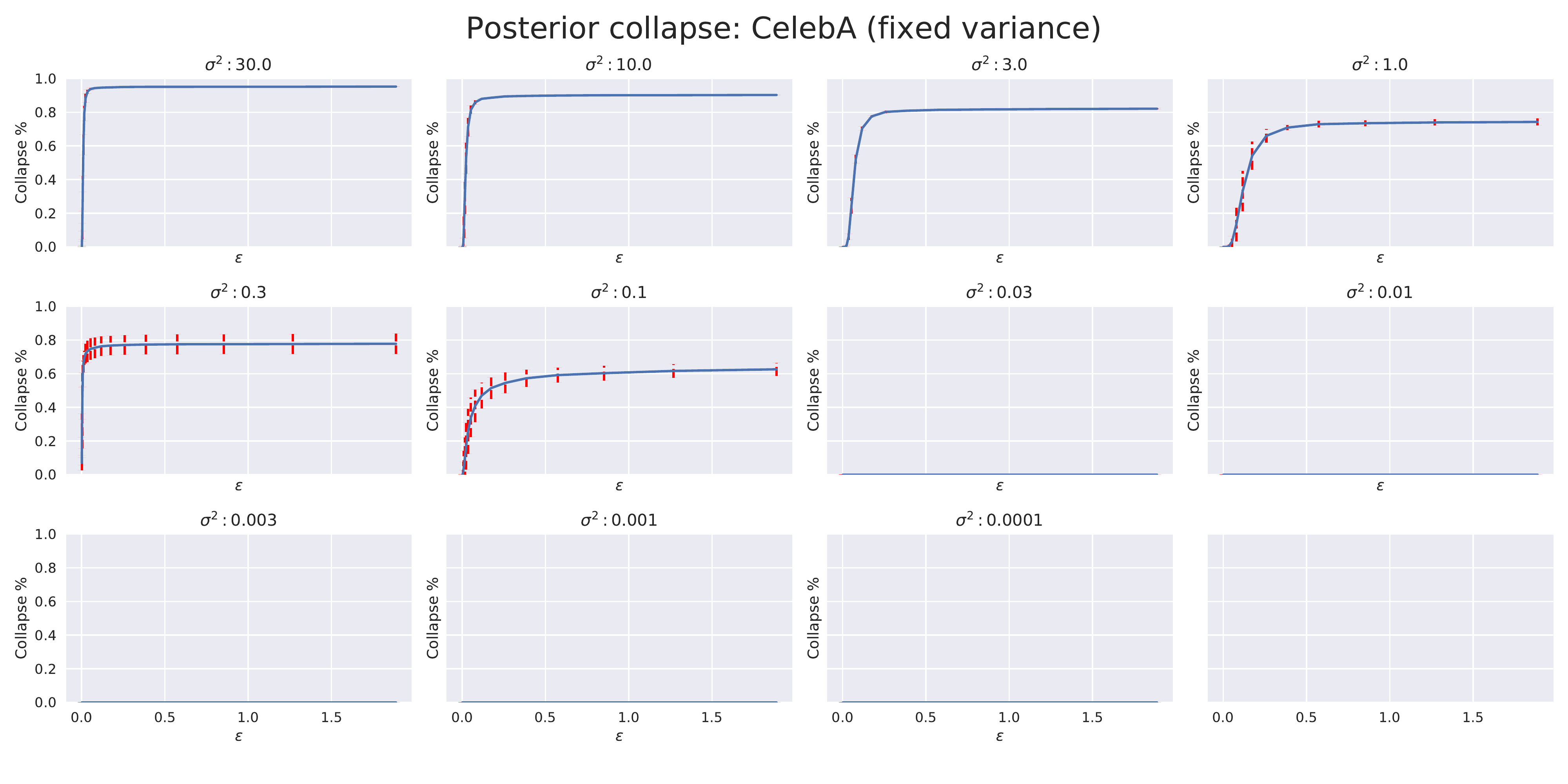}
    \caption{Posterior collapse percentage as a function of $\epsilon$-threshold for a deep VAE trained on CelebA with fixed $\sigma^2$. We measure posterior collapse for trained networks as the proportion of latent dimensions that are within $\epsilon$ KL divergence of the prior for at least a $1-\delta$ proportion of the training data points ($\delta=0.01$ in the plots).}
    \label{fig:posterior_collapse_thresholds_fixed_celeba}
\end{figure}

\begin{figure}
    \centering
    \includegraphics[width=0.98\linewidth]{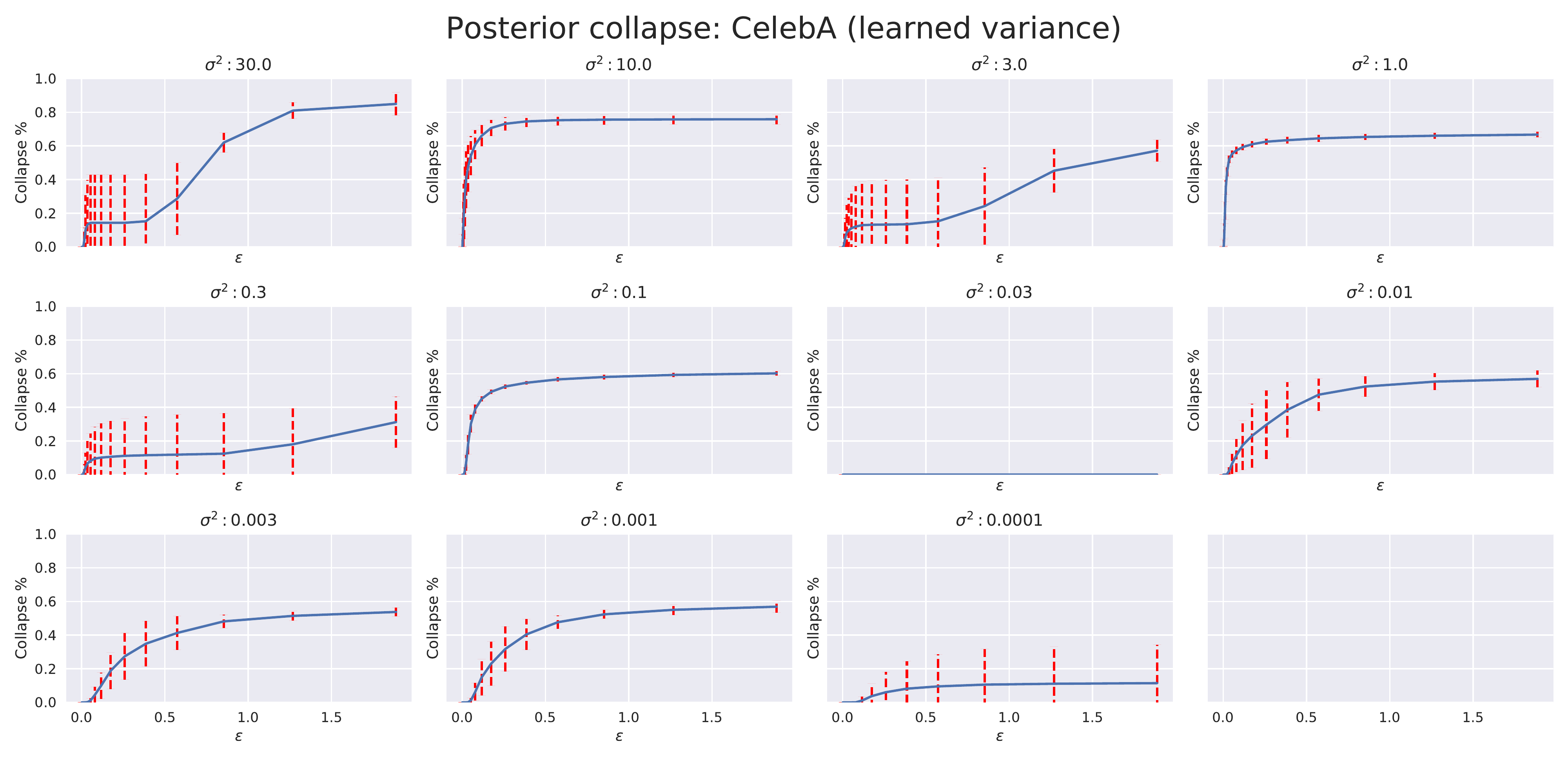}
    \caption{Posterior collapse percentage as a function of $\epsilon$-threshold for a deep VAE trained on CelebA with learned $\sigma^2$. We measure posterior collapse for trained networks as the proportion of latent dimensions that are within $\epsilon$ KL divergence of the prior for at least a $1-\delta$ proportion of the training data points ($\delta=0.01$ in the plots).}
    \label{fig:posterior_collapse_thresholds_learned_celeba}
\end{figure}

\subsubsection{Qualitative Results}
Reconstructions from the KL-Annealed CelebA model are shown in Figure~\ref{fig:celeba_recons}. We also show the output of interpolating in the latent space in Figure~\ref{fig:celeba_interpolate}. To produce the latter plot, we compute the variational mean of 3 input points (top left, top right, bottom left) and interpolate linearly on the plane between them. We also extrapolate out to a fourth point (bottom right), which lies on the plane defined by the other points.

\begin{figure}
    \centering
    \includegraphics[width=0.9\linewidth]{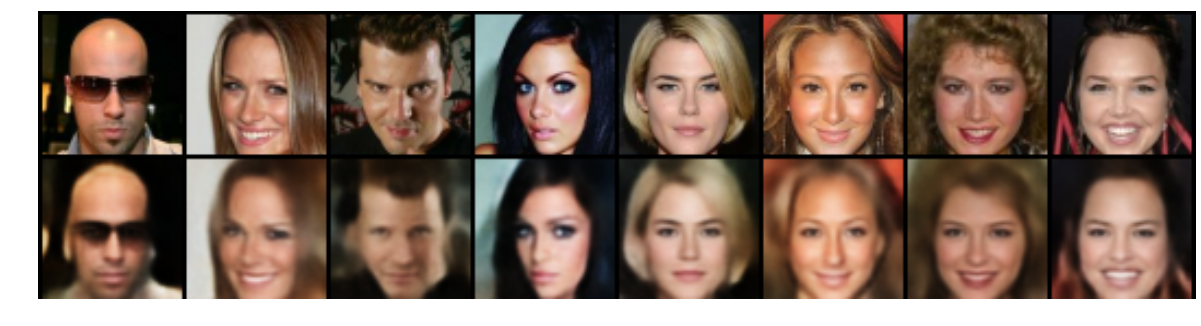}
    \caption{Reconstructions from the convolutional VAE trained with KL-Annealing on CelebA.} 
    \label{fig:celeba_recons}
\end{figure}

\begin{figure}
    \centering
    \includegraphics[width=0.9\linewidth]{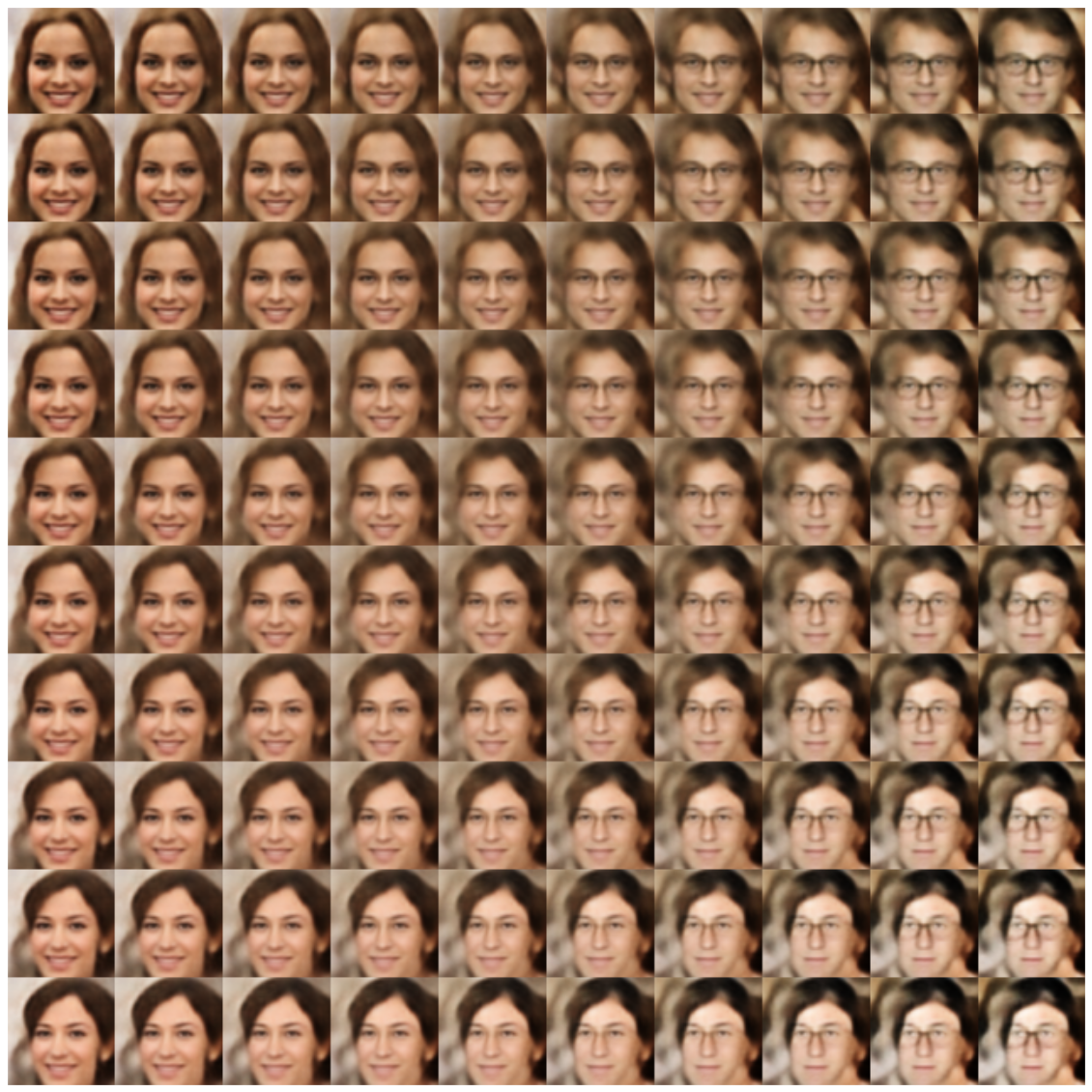}
    \caption{Latent space interpolations from the convolutional VAE trained with KL-Annealing on CelebA.} 
    \label{fig:celeba_interpolate}
\end{figure}

\end{appendix}

\end{document}